\documentclass[11pt, reqno]{article}

% The following packages will be automatically loaded:
% amsmath, amssymb, natbib, graphicx, url, algorithm2e

% delcare packages used
\usepackage[T1]{fontenc}
\usepackage{fullpage}
\usepackage{subfig} 
\usepackage{float}
\usepackage{amsmath}
\usepackage{amsthm}
\usepackage{amssymb}
\usepackage{natbib}
\usepackage{graphicx}
\usepackage{booktabs}
\usepackage{mathtools}
\usepackage{xr,tikz}
\usepackage{enumitem}
\usepackage{nicefrac}
\usepackage{forloop}
\usepackage[hidelinks]{hyperref}
\usepackage{url}
\usepackage{bbm} 
\usepackage{authblk}
\usepackage[capitalise]{cleveref}
\usepackage{environ}
\usepackage{bm,xspace}
\raggedbottom
\usepackage{microtype}
% Paired Delimiters ------------------------------------------------------------
\usepackage{mathtools}

\DeclarePairedDelimiter{\set}{\{}{\}}
\DeclarePairedDelimiter{\parens}{(}{)}

\DeclarePairedDelimiter{\floor}{\lfloor}{\rfloor}
\DeclarePairedDelimiter{\ceil}{\lceil}{\rceil}

%%%%%%%%%%%%%%%%%%%%%%%

\renewcommand{\epsilon}{\varepsilon}
\renewcommand{\emptyset}{\varnothing}
\newcommand{\ground}{V}
\newcommand{\AlgHybrid}{{\textsc{\textsc{Batch-Sieve-Streaming}\texttt{++}}}\xspace}

\newcommand{\AlgSieveStreamingPlus}{{\textsc{Sieve-Streaming}\texttt{++}}\xspace}
\newcommand{\AlgSieveStreaming}{{\textsc{Sieve-Streaming}}\xspace}
\newcommand{\AlgStreamingPreemption}{{\textsc{Preemption-Streaming}}\xspace}
\newcommand{\AlgOne}{{\textsc{Sample-One-Streaming}}\xspace}
\newcommand{\Opt}{\texttt{OPT}\xspace}
\newcommand{\LB}{\texttt{LB}\xspace}

\newcommand{\threOne}{{\texttt{Threshold}}}

\newcommand{\memory}{\texttt{B}}
\newcommand{\ratio}{\texttt{C}}
\newcommand{\buffer}{\cB}
\newcommand{\AlgSampling}{{\textsc{Threshold-Sampling}}\xspace}
\newcommand{\Prob}[2]{\Pr_{#1}\parens*{#2}}
\usepackage{amssymb}

\let\emptyset\varnothing

\DeclareMathOperator*{\argmax}{arg\,max}
\usepackage[capitalise]{cleveref}
\usepackage{eqparbox,array}
\usepackage{algorithm}
\usepackage[noend]{algorithmic}

\let\emptyset\varnothing

\raggedbottom
\newtheorem{theorem}{Theorem}
\newtheorem{lemma}{Lemma}
\newtheorem{definition}{Definition}

\usepackage[capitalise]{cleveref}
\crefname{cor}{Corollary}{Corollaries}

\usepackage{mdframed}

\makeatother

% Caligraphics and Board Letters
\newcommand{\defcal}[1]{\expandafter\newcommand\csname c#1\endcsname{{\mathcal{#1}}}}
\newcommand{\defbb}[1]{\expandafter\newcommand\csname b#1\endcsname{{\mathbb{#1}}}}
\newcounter{calBbCounter}
\forLoop{1}{26}{calBbCounter}{
	\edef\letter{\Alph{calBbCounter}}
	\expandafter\defcal\letter
	\expandafter\defbb\letter
}

\title{Submodular Streaming in All Its Glory: Tight Approximation, \\ Minimum Memory and 	Low 
	Adaptive Complexity}

\author[1]{Ehsan Kazemi}
 
\author[1]{Marko Mitrovic\protect}
\author[2]{Morteza Zadimoghaddam}
\author[2]{\\Silvio Lattanzi}
\author[1]{Amin Karbasi}
\affil[1]{Yale Institute for Network Science\\Yale University}
\affil[2]{Google Research, Z\"urich}
\affil[ ]{\normalsize \texttt{\{ehsan.kazemi, marko.mitrovic, amin.karbasi\}@yale.edu}}
\affil[ ]{\normalsize \texttt{\{zadim, silviol\}@google.com}}

\date{}
\begin{document}

\maketitle
\begin{abstract}
	Streaming algorithms are generally judged by the quality of their solution, memory footprint, and computational complexity. In this paper, we study the problem of maximizing a 
monotone submodular function in the streaming setting with a cardinality constraint $k$. We first propose 
\AlgSieveStreamingPlus, which requires just one pass over the data, keeps 
only $O(k)$ elements and achieves the tight $\nicefrac{1}{2}$-approximation guarantee. The best previously 
known streaming algorithms either achieve a suboptimal $\nicefrac{1}{4}$-approximation with 
$\Theta(k)$ memory or the optimal $\nicefrac{1}{2}$-approximation with $O(k\log k)$ memory. 

Next, we show that by buffering a small fraction of the stream and applying a careful filtering 
procedure, one can heavily reduce the number of adaptive computational rounds, thus substantially lowering the computational complexity of \AlgSieveStreamingPlus. We then generalize our 
results to the more challenging multi-source streaming setting. We show how one can achieve 
the tight $\nicefrac{1}{2}$-approximation guarantee with $O(k)$ shared memory while minimizing not 
only the required rounds of computations but also the total number of communicated bits. 
Finally, we demonstrate the efficiency of our algorithms on real-world data summarization tasks for multi-source streams of tweets and of YouTube videos.

\end{abstract}

\section{Introduction}\label{sec:intro}
Many important problems in machine learning, including data summarization, network inference, active set selection, facility location, and sparse regression can be cast as instances of constrained submodular maximization \citep{krause12survey}. 
Submodularity captures an intuitive diminishing returns property where the gain of adding an element to a set decreases as the set gets larger. 
More formally, a non-negative set function $f: 2^{\ground} \rightarrow \bR_{\geq 0}$ is \textbf{submodular} if for all sets $A \subseteq B \subset \ground$ and every element $e \in  \ground \setminus B$, we have $f(A \cup \{e\}) - f(A) \geq f(B \cup \{e\}) - f(B)$.
The submodular function $f$ is \textbf{monotone} if for all $A \subseteq B$ we have $f(A) \leq f(B)$ .

In this paper, we consider the following canonical optimization problem: 
given a non-negative monotone submodular function $f$, find the set $S^*$ of size at most $k$ that maximizes function $f$:
\begin{align} \label{eq:problem}
	S^* = \argmax_{S \subseteq \ground , |S| \leq k} f(S) .
\end{align}
We define $\Opt = f(S^*)$.
When the data is relatively small and it does not change over time, the greedy algorithm and other fast centralized algorithms provide near-optimal solutions.
Indeed, it is well known that for problem \eqref{eq:problem}
the greedy algorithm (which iteratively adds elements with the largest marginal gain)
achieves a $1 -\nicefrac{1}{e}$ approximation guarantee \citep{nemhauser1978analysis}.

In many real-world applications, we are dealing with massive streams of  images, videos, texts, sensor logs, tweets, and high-dimensional genomics data which are produced from different data sources.
 These data streams have an unprecedented volume and are produced so rapidly that they cannot be stored in memory, which means we cannot apply classical submodular maximization algorithms. 
 In this paper, our goal is to design efficient algorithms for streaming submodular maximization in order to simultaneously provide the best approximation factor, memory complexity, running time, and communication cost.

 For problem \cref{eq:problem}, \citet{norouzifard2018beyond} proved that any streaming algorithm\footnote{They assume the submodular function is evaluated only on feasible sets of cardinality at most $k$. In this paper, we make the same natural assumption regarding the feasible queries.} with a memory $o(n / k)$ cannot provide a solution with  an approximation guarantee better than $\nicefrac{1}{2}$.
 \AlgSieveStreaming is the first streaming algorithm with a constant approximation factor \citep{badanidiyuru2014streaming}.
 This algorithm guarantees an approximation factor of $\nicefrac{1}{2} - 
\epsilon$ and memory complexity of $O(\nicefrac{k \log(k)}{\epsilon})$.
While the approximation guarantee of their \AlgSieveStreaming  is optimal, the memory complexity is a factor of $\log(k)$ away from the desired lower bound $\Theta(k)$. 
In contrast, \mbox{\citet{buchbinder2015online}} designed a streaming algorithm with a $\nicefrac{1}{4}$-approximation factor and optimal memory $\Theta(k)$.
The first contribution of this paper is to answer the following question:
	Is there a streaming algorithm with an approximation factor arbitrarily close to $\nicefrac{1}{2}$  whose memory complexity is $O(k)$?

Our new algorithm, \AlgSieveStreamingPlus, closes the gap between the optimal approximation factor and memory complexity, but it still has some drawbacks.
In fact, in many applications of submodular maximization, the function evaluations (or equivalently Oracle queries)\footnote{The Oracle for a submodular function $f$ receives a set $S$ and returns its value $f(S)$.} are computationally expensive and  can take a long time to process.

In this context, the fundamental concept of adaptivity quantifies the number of sequential rounds required to maximize a submodular function, where in each round, we can make polynomially many independent Oracle queries in parallel.
More formally, given an Oracle $f$, an algorithm is $\ell$-adaptive if every query $\cQ$ to the Oracle $f$ at a round $1 \leq  i \leq \ell$ is independent of the answers $f(\cQ’)$ to all other queries $\cQ'$ at rounds $j$, $i \leq j \leq \ell$ \citep{balkanski2018adaptive}. 
 The adaptivity of an algorithm has important practical consequences as low adaptive complexity results in substantial speedups in parallel computing time.

All the existing streaming algorithms require at least one Oracle query for each incoming element. 
This results in an adaptive complexity of $\Omega(n)$ where $n$ is the total number of elements in the stream.
Furthermore, in many real-world applications, data streams arrive at such a fast pace that it is not possible to perform multiple Oracle queries in real time.  
This could result in missing potentially important elements or causing a huge delay.

Our idea to tackle the problem of adaptivity is to introduce a hybrid model where we allow a machine to buffer a certain amount of data, which allows us to perform many Oracle queries in parallel. We design a sampling algorithm that, in only a few adaptive rounds, picks items with good marginal gain and discards the rest.
The main benefit of this method is that we can quickly empty the buffer and continue the optimization process. 
In this way, we obtain an
algorithm with optimal approximation, query footprint, and near-optimal adaptivity.

Next, we focus on an additional challenge posed by real-world data where often multiple streams co-exist at the same time. In fact, while submodular maximization over only one stream of data is challenging, in practice there are many massive data streams generated simultaneously from a variety of sources. 
For example, these multi-source streams are generated by tweets from news agencies, videos and images from sporting events, or automated security systems and sensor logs.
These data streams have an enormous volume and are produced so rapidly that they cannot be even transferred to a central machine. 
Therefore, in the multi-source streaming setting, other than approximation factor, memory and adaptivity, it is essential to keep communication cost low.
To solve this problem, we show that a carefully-designed generalization of our proposed algorithm for single-source streams also has an optimal communication cost.

\section{Related Work}
 \citet{badanidiyuru2014streaming} were the first to  consider a one-pass streaming algorithm for maximizing a monotone submodular function under a cardinality constraint. 
\citet{buchbinder2015online} improved the memory complexity of \citep{badanidiyuru2014streaming} to $\Theta(k)$ by designing a $\nicefrac{1}{4}$ approximation algorithm.
\citet{norouzifard2018beyond} introduced an algorithm for random order streams that beats the $\nicefrac{1}{2}$ bound. 
They also studied the multi-pass streaming submodular maximization problem.
\citet{chakrabarti2015submodular} studied this problem subject to the intersection of $p$ matroid constraints. These results were further extended to more general constraints such as $p$-matchoids \citep{chekuri2015streaming, feldman2018do}.
Also, there have been some very recent works to generalize these results to non-monotone submodular functions \citep{chakrabarti2015submodular, chekuri2015streaming,chan2017online, mirzasoleiman2018streaming,feldman2018do}.
\citet{elenberg2017streaming} provide a streaming algorithm with a constant factor approximation for a  generalized notion of submodular objective functions, called weak submodularity.
In addition, a few other works study the streaming submodular maximization over  sliding windows~\citep{chen2016submodular,epasto2017submodular}

To scale to  very large datasets, several solutions to the problem of submodular maximization have been proposed  in recent years \citep{mirzasoleiman2015lazier, mirzasoleiman2016fast, feldman2017greed, badanidiyuru2014fast, mitrovic2017differentially}.
\citet{mirzasoleiman2015lazier} proposed the first linear-time algorithm for maximizing a monotone submodular function subject to a cardinality constraint that achieves a $(1 - 1/e - \epsilon)$-approximation.
\citet{buchbinder2017comparing} extended these results to non-monotone submodular functions.

Another line of work investigates the problem of scalable submodular maximization in  the MapReduce setting where  the data is split amongst several machines \citep{kumar2015fast, mirzasoleiman16distributed, barbosa2015power, mirrokni2015randomized, mirzasoleiman2016cover, barbosa2016new, liu2018submodular}.
Each machine runs a centralized algorithm on its data and passes the result to a central machine. Then, the central machine outputs the final answer. Since each machine runs a variant of the greedy algorithm, the adaptivity of all these approaches  is linear in $k$, i.e., it is $\Omega(n)$ in the worst-case.

Practical concerns of scalability have motivated studying the adaptivity of submodular maximization algorithms.
\citet{balkanski2018adaptive} showed that no algorithm can obtain a constant factor approximation in $o(\log n)$ adaptive rounds for monotone submodular maximization subject to a cardinality constraint. 
They introduced the first constant factor approximation algorithm for submodular maximization with logarithmic adaptive rounds. 
Their algorithm, in $O(\log n)$ adaptive rounds, finds a solution with an approximation arbitrarily close to $\nicefrac{1}{3}$. 
These bounds were recently improved  by $(1 - \nicefrac{1}{e} - \epsilon)$-approximation algorithm with $O(\nicefrac{\log(n)}{\text{poly}(\epsilon)})$ adaptivity
\citep{fahrbach2018submodular, balkanski2018exponential,ene2018submodular}.
More recently, \citet{chekuri2018parallelizing} studied the additivity of submodular maximization under a matroid constraint.
In addition, \citet{balkanski2018nonmonotone} proposed  an algorithm for maximizing a non-monotone submodular function with cardinality constraint $k$ whose approximation factor is arbitrarily close
to $\nicefrac{1}{(2e)}$ in $O(\log^2n)$ adaptive rounds.
\citet{fahrbach2018nonmonotone} improved the adaptive complexity of this problem to $O(\log(n))$.
\citet{chen2018unconstrained}  considered the unconstrained submodular maximization problem and proposed the first algorithm that achieves the optimal approximation guarantee in a constant number of adaptive rounds.

\paragraph{Contributions}
The main contributions of our paper are:
\begin{itemize}
	\item We introduce \AlgSieveStreamingPlus which is the first streaming algorithm with  optimal approximation factor and memory complexity.
	Note that our optimality result for the approximation factor is under the natural assumption that the Oracle is allowed to make queries only over the feasible sets of cardinality at most $k$.
	\item We design an algorithm for a hybrid model of submodular maximization, where it enjoys a near-optimal adaptive complexity and it still guarantees  both optimal approximation factor and memory complexity.
	We also prove that our algorithm has a very low communication cost in a multi-source streaming setting.
	\item We use multi-source streams of data from Twitter and YouTube to compare our algorithms against state-of-the-art streaming   approaches.
	\item We significantly improve the memory complexity for several important problems in the submodular maximization literature by applying the main idea of \AlgSieveStreamingPlus.
\end{itemize}

\section{Streaming Submodular Maximization} \label{sec:streamplus}
In this section, we propose an algorithm called \AlgSieveStreamingPlus that has the optimal $\nicefrac{1}{2}$-approximation factor and memory complexity $O(k)$.
Our algorithm is designed based on the \AlgSieveStreaming  algorithm \citep{badanidiyuru2014streaming}.

The general idea behind \AlgSieveStreaming is that choosing elements with marginal gain at least $\tau^* = \tfrac{\Opt}{2k}$ from a data stream  returns a set $S$ with an objective value of at least $f(S) \geq \frac{\Opt}{2}$.
The main problem with this primary idea is that the value of \Opt is not known. 
\citet{badanidiyuru2014streaming} pointed out that, from the submodularity of $f$, we can trivially deduce $\Delta_0 \leq \Opt \leq k \Delta_0$ where $\Delta_0$ is the largest value in the set $\{f(\{e\}) \mid e \in \ground \}$. 
It is also possible to find an accurate guess for $\Opt$ by dividing the range $[\Delta_0, k \Delta_0]$ into small intervals of $[\tau_i , \tau_{i+1})$. 
For this reason, it suffices to try $\log k$ different thresholds $\tau$ to obtain a close enough estimate of \Opt. 
Furthermore, in a streaming setting, where we do not know the maximum value of singletons a priori, \citet{badanidiyuru2014streaming} showed it suffices to only consider the range $\Delta \leq \Opt \leq 2 k\Delta $, where $\Delta$ is the maximum value of singleton elements observed so far.
The memory complexity of \AlgSieveStreaming is $O(\nicefrac{k\log k}{\epsilon})$ because there are $O(\nicefrac{\log k}{\epsilon})$ different thresholds and, for each one, we could keep at most $k$ elements.

\subsection{The \AlgSieveStreamingPlus Algorithm} \label{batch}
In the rest of this section, we show that with a novel modification to \AlgSieveStreaming it is possible to significantly reduce the memory complexity of the streaming algorithm.

Our main observation is that in the process of guessing $\Opt$, the previous algorithm uses $\Delta$ as a lower bound for $\Opt$;
but as new elements are added to sets $S_\tau$, it is possible to get better and better estimates of a lower bound on \Opt.
More specifically, we have $\Opt \geq \LB \triangleq \max_\tau f(S_\tau)$ and as a result, there is no need to keep thresholds smaller than $\frac{\LB}{2k}$.  
Also, for a threshold $\tau$ we can conclude that there is at most $\frac{\LB}{\tau}$ elements in set $S_\tau$.
These two important observations  allow us to get a geometrically decreasing upper bound on the number of items stored for each guess $\tau$, which gives the asymptotically optimal memory complexity of $O(k).$
The details of our algorithm (\AlgSieveStreamingPlus) are described in \cref{alg:Sieve-Stream++}.
Note that we represent the marginal gain of a set $A$ to the set $B$ with $f(A \mid B) = f(A \cup B) - f(B)$.
\cref{thm:sieve-stream++} guarantees the performance of \AlgSieveStreamingPlus. 
\cref{tbl:summary} compares the state-of-the-art streaming algorithms based on approximation ratio, memory complexity and queries per element.

\begin{algorithm}[htb!]
	\caption{\AlgSieveStreamingPlus}\label{alg:Sieve-Stream++}
	\vspace{0.1cm}
	\textbf{Input:} Submodular function $f$, data stream $\ground$, cardinality constraint $k$ and error term $\epsilon$
	\begin{algorithmic}[1]
		\STATE $\tau_{\min} \gets 0$, $\Delta \gets 0$ and $\LB \gets 0$ 
		\WHILE{there is an incoming item $e$ from $\ground$}
		\STATE 	$\Delta \gets \max \set{\Delta, f(\set{e})} $
		\STATE $\tau_{\min} = \frac{\max(\LB, \Delta)}{2k}$ 
		\STATE Discard all sets $S_\tau$ with $\tau < \tau_{\min}$
		\FOR{$\tau \in \{(1+\epsilon)^{i} | \nicefrac{\tau_{\min}}{(1+ 
				\epsilon)} \leq (1+\epsilon)^{i} \leq \Delta \}$}
		\STATE \textbf{if} $\tau$ is a new threshold \textbf{then}  $S_\tau \gets \emptyset$
		\IF{$|S_\tau| < k$ and $f(\{e\} \mid S_\tau) \geq \tau$} 
		\STATE $S_\tau \gets S_\tau \cup \set{e}$ and $\LB \gets \max \set{\LB, f(S_\tau)}$
		\ENDIF
		\ENDFOR
		\ENDWHILE
		\STATE \textbf{return} $\argmax_{S_\tau} f(S_\tau)$
	\end{algorithmic}
\end{algorithm}

\begin{table}
	\caption{Streaming algorithms for non-negative and monotone submodular maximization subject to a cardinality constraint $k$. 
		The \AlgSieveStreamingPlus provides the best approximation ratio, memory complexity (up to a constant factor), and query complexity.} \label{tbl:summary}
	\begin{center}
	{\small
		\begin{tabular}{lcccl}
			\toprule
			\textbf{\footnotesize Algorithm} & \textbf{\footnotesize Ratio} & \textbf{\footnotesize Memory} & \textbf{\footnotesize Queries per Element} & \textbf{\footnotesize Reference} \\
			\midrule
			\AlgStreamingPreemption & $\nicefrac{1}{4}$ & $O(k)$ & $O(k)$ & \citet{buchbinder2015online}  \\
			\AlgSieveStreaming &  $\nicefrac{1}{2} - \epsilon$ & $O(\nicefrac{k \log (k)}{\epsilon})$ & $O(\nicefrac{\log (k)}{\epsilon})$ & \citet{badanidiyuru2014streaming} \\
			\AlgSieveStreamingPlus &  $\nicefrac{1}{2} - \epsilon$ & $O(\nicefrac{k}{\epsilon})$ & $O(\nicefrac{\log (k)}{\epsilon})$ & Ours \\
			\bottomrule
		\end{tabular}
	}
	\end{center}
\end{table}

\begin{theorem} \label{thm:sieve-stream++}
	For a non-negative  monotone submodular function $f$ subject to a cardinality constraint $k$,  \AlgSieveStreamingPlus (\cref{alg:Sieve-Stream++}) returns a solution $S$ such that
	(i) $f(S) \geq (\nicefrac{1}{2} - \epsilon) \cdot \max_{A \subseteq \ground, |A| \leq k} f(A)$,
	(ii) memory complexity is $O(\nicefrac{k}{\epsilon})$,
	and (iii) number of queries is $O(\nicefrac{\log (k)}{\epsilon})$ per each element.
\end{theorem}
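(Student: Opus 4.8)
The plan is to prove the three parts separately, devoting most of the effort to the approximation guarantee (i); parts (ii) and (iii) reduce to elementary counting once the right monotone invariants are isolated. I would first fix notation: at any instant, \LB, $\Delta$ and $\tau_{\min}=\max(\LB,\Delta)/(2k)$ denote the current values of these quantities, all three are non-decreasing over the run, and \LB\ is updated to dominate $f(S_\tau)$ for every live threshold $\tau$ exactly when $S_\tau$ grows, so $f(S_\tau)\le\LB$ holds for every live $\tau$ at all times.

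\emph{Memory.} The number of stored elements is $\sum_\tau|S_\tau|$ over the live thresholds, and I would bound each term two ways: trivially $|S_\tau|\le k$, and, since every element placed in $S_\tau$ contributed marginal gain at least $\tau$ while $f$ is monotone with $f(\emptyset)\ge 0$, also $|S_\tau|\le f(S_\tau)/\tau\le\LB/\tau$. Every live $\tau$ lies on the grid $\{(1+\epsilon)^i\}$ and satisfies $\tau\ge\tau_{\min}/(1+\epsilon)\ge\LB/(2k(1+\epsilon))$; splitting the grid at the value $\LB/k$, the $O(1/\epsilon)$ thresholds below it each contribute at most $k$, while those at or above it contribute $\sum_j\LB/\tau\le k(1+\epsilon)/\epsilon$ by a geometric series. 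Both pieces are $O(k/\epsilon)$ (and if $\LB=0$ nothing has been stored, so the bound is vacuous). \emph{Queries.} Processing an element costs the singleton evaluation on line~3 plus one marginal-gain evaluation $f(\{e\}\mid S_\tau)$ per live threshold (the cached $f(S_\tau)$ turns this into a single oracle call); since the live thresholds lie in $[\tau_{\min}/(1+\epsilon),\Delta]$ with $\tau_{\min}\ge\Delta/(2k)$, there are $O(\log_{1+\epsilon}k)=O(\log(k)/\epsilon)$ of them.

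\emph{Approximation.} I would start from the elementary facts $\Delta\le\Opt\le k\Delta$ and $\LB\le\Opt$ (every singleton and every $S_\tau$ is feasible; the upper bound on \Opt\ is subadditivity), so the final value of $\tau_{\min}$ is at most $\Opt/(2k)\le\Delta/2$. Hence the grid contains a threshold $\tau$ with $\Opt/(2k(1+\epsilon))<\tau\le\Opt/(2k)$, and this $\tau$ lies in $[\tau_{\min}/(1+\epsilon),\Delta]$ at termination; since $\tau_{\min},\Delta$ are monotone, once $S_\tau$ is created it is never discarded, so it is offered every stream element from the moment $\tau$ becomes live. If $|S_\tau|=k$ at the end then $f(S_\tau)\ge k\tau>\Opt/(2(1+\epsilon))\ge(\tfrac12-\epsilon)\Opt$. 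Otherwise $|S_\tau|<k$ throughout, and each $e\in S^*\setminus S_\tau$ was either rejected on arrival or arrived before $\tau$ was live, in which case line~3 shows $f(\{e\})<\tau$; either way submodularity gives $f(\{e\}\mid S_\tau)<\tau$, so $\Opt\le f(S^*\cup S_\tau)\le f(S_\tau)+\sum_{e\in S^*\setminus S_\tau}f(\{e\}\mid S_\tau)<f(S_\tau)+k\tau\le f(S_\tau)+\Opt/2$, i.e.\ $f(S_\tau)>\Opt/2$. In both cases $f(S)=\max_\tau f(S_\tau)\ge f(S_\tau)\ge(\tfrac12-\epsilon)\Opt$, the maximum being over live thresholds.

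The main obstacle — the genuine departure from the original \AlgSieveStreaming\ analysis — is controlling the interaction between the discarding rule and this argument: one must be sure the near-optimal threshold $\tau\approx\Opt/(2k)$ is never thrown away by the rising $\tau_{\min}$, and that $S_\tau$ has effectively seen every optimal element that matters. Both follow from the monotonicity of $\tau_{\min}$ and $\Delta$ together with the observation that an element arriving before $\tau$ is live has singleton value, hence marginal gain onto any set, below $\tau$. The remaining quantitative work (the geometric-series estimate and the $\log_{1+\epsilon}$ count) is routine, and it is cleanest to declare at the outset that $\epsilon$ is small enough — say $\epsilon\le\tfrac12$ — that $1/\ln(1+\epsilon)=O(1/\epsilon)$ and $1/(1+\epsilon)\ge 1-\epsilon$.
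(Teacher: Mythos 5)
Your proposal is correct and follows essentially the same route as the paper's proof: the same two-case analysis ($|S_\tau|=k$ versus $|S_\tau|<k$) for a grid threshold within a $(1+\epsilon)$ factor of $\Opt/(2k)$, the same memory bound obtained by splitting the live thresholds at $\LB/k$ and using the trivial bound $k$ below and the geometric bound $\LB/\tau$ above, and the same $O(\log_{1+\epsilon} k)$ count of thresholds for the per-element query cost. The only loose spot is your claim that the chosen threshold's set is never discarded: since $\tau_{\min}\le \Opt/(2k)$ while your $\tau$ may be strictly below $\Opt/(2k)$, monotonicity of $\tau_{\min}$ alone does not rule out $\tau<\tau_{\min}$ at a later time; the clean repair is to take the smallest grid point that is at least $\Opt/(2k)$ (it can never fall below $\tau_{\min}$, and in the $|S_\tau|<k$ case this only weakens the bound to $(\tfrac12-\tfrac{\epsilon}{2})\Opt$) --- but the paper's own proof silently glosses the same point, so this is a shared simplification rather than a defect specific to your argument.
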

\begin{proof}
	\noindent
	\textbf{Approximation guarantee}
	The approximation ratio is proven very similar to the approximation guarantee of \AlgSieveStreaming \cite{badanidiyuru2014streaming}.
	Let's define $S^{*} = \argmax_{A \subseteq \ground, |A| \leq k} f(A)$, $\Opt = f(S^{*})$ and $\tau^* = \frac{\Opt}{2k}$.
	We further define $\Delta = \max_{e \in \ground} f(\set{e})$. 
	It is easy to observe that $\max \set{\Delta, \LB} \leq \Opt \leq k \Delta$ and there is a threshold $\tau$ such that $(1 - \epsilon)\tau^* \leq \tau < \tau^*$. 
	Now consider the set $S_\tau$.
	\AlgSieveStreamingPlus adds elements with a marginal gain  at least $\tau$ to the set $S_{\tau}$. We have two cases:
	\begin{itemize}
		\item $|S_{\tau}| = k:$ We define $S_{\tau} = \set{e_1, \cdots, e_k}$ where $e_i$ is the $i$-th picked element. Furthermore, we define $S_{\tau,i} = \set{e_1, \cdots, e_i}$. We have
		\[ f(S_{\tau})  = \sum_{i=1}^{k}  f(\{e_i\} \mid S_{\tau, i-1}) \geq k \tau \geq (\frac{1}{2} - \epsilon)  \cdot \Opt.\]
		This is true because the marginal gain of each element at the time it has been added to the set $S_\tau$ is at least $\tau^{*}$.
		\item $|S_{\tau}| < k:$ We have
		\begin{align*}
		\Opt  \leq  f(S^* \cup S_{\tau}) & = f(S_{\tau}) + f(S^* \mid S_{\tau}) \\
	 & 	\overset{(a)}{\leq}  f({S_\tau}) + \sum_{e \in S^* \setminus S_{\tau}} f(\{e\} \mid S_{\tau})  \\
	 & \overset{(b)}{\leq} f({S_\tau}) + k \tau^* \\
	 & = 
	 f({S_\tau}) +  \dfrac{\Opt}{2},
		\end{align*}
		where $(a)$ is correct because $f$ is a submodular function, and we have $(b)$ because each element of $S^*$ that is not picked by the algorithm has had a marginal gain of less than $\tau < \tau^*$.
	\end{itemize}
	
	\noindent
	\textbf{Memory complexity}
	Let $S_\tau$ be the set we maintain for threshold $\tau$. We know that $\Opt$ is at least $\LB = \max_\tau f(S_\tau) \geq  \max_\tau (|S_\tau| \times \tau)$ because the marginal gain of an element in set $S_\tau$ is at least $\tau$. 
	Note that $\LB$ is the best solution found so far.
	Given this lower bound on $\Opt$ (which is potentially better than $\Delta$ if we have picked enough items), we can dismiss all thresholds that are too small, i.e., remove all thresholds $\tau < \frac{\LB}{2k} \leq \frac{\Opt}{2k}$.
	For any remaining $\tau \geq \frac{\LB}{2k}$, we know that $|S_\tau|$ is at most $\frac{\LB}{\tau}$.
	We consider two sets of thresholds: (i) $ \frac{\LB}{2k} \leq  \tau \leq  \frac{\LB}{k}$, and (ii) $\tau \geq \frac{\LB}{k}$.
	For the first group of thresholds, the bound on $|S_\tau|$ is the trivial upper bound of $k$. 
	Note that we have $\log_{1+\epsilon} (2) \leq \lceil \log(2) / \epsilon \rceil$ of such thresholds. 
	For the second group of thresholds, as we increase $\tau$, for a fixed value of $\LB$ the upper bound on the size of $S_\tau$ gets smaller.
	Indeed, these upper bounds are geometrically decreasing values with the first term equal to $k$. And they reduce by a coefficient of $(1 + \epsilon)$ as thresholds increase by a factor of $(1+\epsilon)$. 
	Therefore, we can bound the memory complexity by
	\[\textrm{Memory complexity} \leq  \left\lceil \dfrac{k \log(2)}{\epsilon} \right\rceil  + \sum_{i = 0}^{\log_{1+\epsilon}(k)} \dfrac{k}{(1+\epsilon)^i} = O\left( \dfrac{k}{\epsilon} \right). \]
	Therefore, the total memory complexity is $O(\frac{k}{\epsilon})$.
	
	\noindent
	\textbf{Query complexity} For every incoming element $e$, in the worst case, we should compute the marginal gain of $e$ to all the existing sets $S_\tau$. Because there is $O(\frac{\log k}{\epsilon})$ of such sets (the number of different thresholds), therefore the query complexity per element is $O(\frac{\log k}{\epsilon})$.
\end{proof}

\subsection{The \AlgHybrid Algorithm}
\label{sec:single-alg}

The \AlgSieveStreamingPlus algorithm, for each incoming element of the stream, requires at least one query to the Oracle which increases its adaptive complexity to $\Omega(n)$.
Since the adaptivity of an algorithm has a significant impact on its ability to be executed in parallel, there is a dire need to implement streaming algorithms with low adaptivity.
To address this concern, our proposal is to first buffer a fraction of the data stream and then, through a parallel threshold filtering procedure, reduce the adaptive complexity, thus substantially lower the running time.
Our results show that  a small buffer memory can significantly parallelize streaming submodular maximization.

One natural idea to parallelize the process of maximization over a buffer is to iteratively perform the following two steps:
(i) for a threshold $\tau$, in one adaptive round, compute the marginal gain of elements to set $S_\tau$ and discard those with a gain less than $\tau$, and
(ii) pick one of the remaining items with a good marginal gain and add it to $S_\tau$. 
This process is repeated at most $k$ times.
We refer to this algorithm as \AlgOne and we will use it as a baseline in \cref{multiSourceExperiments}.

Although by using this method we can find a solution with $\nicefrac{1}{2} - \epsilon$ approximation factor, the adaptive complexity of this algorithm is $\Omega(k)$ which is still prohibitive in the worst case.
For this reason, we introduce a hybrid algorithm called \AlgHybrid.
This algorithm enjoys two important properties: (i) the number of adaptive rounds is near-optimal, and (ii) it has an optimal memory complexity (by adopting an idea similar to \AlgSieveStreamingPlus).
Next, we explain \AlgHybrid (\cref{alg:hybrid}) in detail.

First, we assume that the machine has a buffer $\buffer$ that can store at most $\memory$ elements.
For a data stream $\ground,$ whenever $\threOne$ fraction of the buffer is full, the optimization process begins.
The purpose of $\threOne$ is to empty the buffer before it gets completely full and to avoid losing arriving elements.
Similar to the other sieve streaming methods, \AlgHybrid requires us to guess the value of $\tau^* = \frac{\Opt}{2k}$.
For each guess $\tau$,  \AlgHybrid uses \AlgSampling (\cref{alg:Sampling}) as a subroutine.
\AlgSampling iteratively picks random batches of elements $T$.
If their average marginal gain to the set of picked elements $S_{\tau}$ is at least $(1-\epsilon)\tau$ it adds that batch to $S_{\tau}$. Otherwise, all elements with marginal gain less than $\tau$ to the set $S_{\tau}$ are filtered out.
\AlgSampling repeats this process until the buffer is empty or $|S_\tau|=k$.

Note that in \cref{alg:hybrid}, we define the function $f_S$ as $f_S(A) = f(A \mid S) $, which calculates the marginal gain of adding a set $A$ to $S$. 
It is straightforward to show that if $f$ is a non-negative and monotone submodular function, then $f_S$ is also non-negative and monotone submodular.

The adaptive complexity of \AlgHybrid is the number of times its buffer gets full (which is at most $N/\memory$) multiplied by the adaptive complexity of \AlgSampling. 
The reason for the low adaptive complexity of \AlgSampling is quite subtle.
In Line~\ref{line:filter} of \cref{alg:Sampling}, with a non-negligible probability, a constant fraction of items is discarded from the buffer. 
Thus, the while loop continues for at most $O(\log \memory)$  steps. 
Since we increase the batch size by a constant factor of $(1+\epsilon)$ each time, the for loops within each while loop will run at most $O(\log (k) / \epsilon)$ times. 
Therefore, the total adaptive complexity of \AlgHybrid is $O(\frac{N \log (\memory) \log (k)}{\memory  \epsilon} )$
Note that when $|S| < 1/\epsilon$, multiplying the size by $(1+\epsilon)$ would increase it less than one, so we increase the batch size one by one for the first loop in Lines~\ref{line:loop1:begin}--\ref{line:loop1:end} of \cref{alg:Sampling}.
\cref{thm:batch-streaming} guarantees the performance of \AlgHybrid.

\begin{algorithm}[htb!]
	\caption{\AlgHybrid}\label{alg:hybrid}
	\vspace{0.1cm}
	\textbf{Input:} Stream of data $\ground,$ submodular set function $f$, cardinality constraint $k$, buffer $\buffer$ with a memory $\memory$,  $\threOne$, and error term $\epsilon$.
	\begin{algorithmic}[1]
		\STATE $\Delta \gets 0, \tau_{\min} \gets 0, \LB \gets 0$ and $\buffer \gets \emptyset$
		\WHILE{there is an incoming element $e$ from $\ground$}
		\STATE Add $e$ to $\buffer$
		\IF{the buffer $\buffer$ is $\threOne$ percent full}
		\STATE 	$\Delta \gets\max\{ \Delta,\max_{e \in \buffer} f(e)  \} $, $\tau_{\min} = \frac{\max(\LB, \Delta)}{2k(1+\epsilon)}$
		\STATE Discard all sets $S_\tau$ with $\tau < \tau_{\min}$
		\FOR{$\tau \in \{(1+\epsilon)^{i} | \tau_{\min} \leq (1+\epsilon)^{i} \leq \Delta \}$}  
		\STATE If $\tau$ is a new threshold then assign a new set $S_\tau$ to it, i.e., $S_\tau \gets \emptyset$
		\IF{$|S_\tau| < k$}
		\STATE $T \gets \AlgSampling(f_{S_{\tau}},\buffer, k - |S_\tau|, \tau, \epsilon)$
		\STATE $S_\tau \gets S_\tau \cup T$
		\ENDIF
		\ENDFOR
		\STATE $\LB = \max_{S_\tau} f(S_\tau)$ and $\buffer \gets \emptyset$
		\ENDIF
		\ENDWHILE
		\STATE \textbf{return} $\argmax_{S_\tau} f(S_\tau)$
	\end{algorithmic}
\end{algorithm}

\begin{algorithm}[H]
	\caption{\AlgSampling }\label{alg:Sampling}
	\vspace{0.1cm}
	\textbf{Input:} Submodular set function $f$,  set of buffered items $\buffer$,	
cardinality constraint $k$, threshold $\tau,$ and error term $\epsilon$
\begin{algorithmic}[1]
		\STATE  $S  \gets \emptyset$
		\WHILE{$ |\buffer| > 0$ and $|S| < k$ \label{line:filter-loop}}
		\STATE update $\buffer \gets \set{x \in \buffer : f(\{x\} \mid S) \geq \tau}$  and filter out the rest \label{line:filter}
		\FOR{$i=1$ to $\ceil{\frac{1}{\epsilon}}$ \label{line:loop1:begin}} 
		\STATE Sample $x$ uniformly at random from $\buffer \setminus S$  \label{line:sample-one}
		\IF{$f(\set{x} | S) \leq (1 - \epsilon) \tau$}
		\STATE \textbf{break} and go to Line~\ref{line:filter-loop} \label{line:break-one}
		\ELSE
		\STATE $S \gets S \cup \set{x}$ \label{line:pick-one}
		\STATE \textbf{if} $|S| = k$ \textbf{then return} $S$ \label{line:loop1:end}
		\ENDIF
		\ENDFOR
		\FOR{$i=\floor{\log_{1+\epsilon}(\nicefrac{1}{\epsilon})}$ to $ \ceil{\log_{1+\epsilon} k} - 1$ \label{line:loop2:begin}}
		\STATE $t \gets \min \set{ \floor{(1 + \epsilon)^{i+1} - (1 + \epsilon)^{i}},  |\buffer \setminus S|, k - |S|}$
		\STATE Sample a random set $T$ of size $t$ from $\buffer \setminus S$  \label{line:pick-batch}
		\IF{$|S \cup T| = k$} 
		\STATE \textbf{return} $S \cup T$ \label{line:check-size}
		\ENDIF
		\IF{$\dfrac{f(T \mid S)}{|T|} \leq (1 - \epsilon) \tau$ \label{line:test-margin} }
		\STATE$S \gets S \cup T$ and \textbf{break}  \label{line:break-insertion}
		\ELSE
		\STATE $S \gets S \cup T$  \label{line:end-loop}\label{line:loop2:end}
		\ENDIF
		\ENDFOR
		\ENDWHILE
		\STATE \textbf{return} $S$
	\end{algorithmic}
\end{algorithm}

\newpage

\begin{theorem} \label{thm:batch-streaming}
	For a non-negative monotone submodular function $f$ subject to a cardinality constraint $k$, 
define $N$ to be the total number of elements in the stream, $\memory$ to be the buffer size and  $\epsilon < \nicefrac{1}{3}$ to be a constant.
For  \AlgHybrid we have: 
(i) the approximation factor is $\nicefrac{1}{2} - \nicefrac{3 \epsilon}{2}$, (ii) the memory complexity is $O( \memory + \nicefrac{k}{\epsilon})$, and (iii) the expected adaptive complexity is $O(\frac{N \log (\memory) \log (k)}{\memory  \epsilon} )$.
\end{theorem}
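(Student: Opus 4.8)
The plan is to establish the three parts independently: (ii) is essentially the memory calculation behind \cref{thm:sieve-stream++}, (i) reuses the approximation argument of \AlgSieveStreamingPlus together with one extra bookkeeping step, and (iii) is where the real work lies. \emph{Memory (ii).} The buffer $\buffer$ holds at most $\memory$ elements, and the scratch set $S$ together with one batch $T$ inside \AlgSampling use $O(k)$. For the threshold sets I would reuse the argument of \cref{thm:sieve-stream++}: because $\LB$ and $\Delta$ are nondecreasing, a threshold $\tau$ that survives to the end had $\tau\ge\tau_{\min}=\max(\LB,\Delta)/(2k(1+\epsilon))$ and hence $|S_\tau|\le\min(k,\LB/\tau)$ throughout; splitting the surviving thresholds at $\LB/k$ gives $O(1/\epsilon)$ of them bounded by $k$ and a geometrically decaying tail summing to $O(k)$, so $\sum_\tau|S_\tau|=O(k/\epsilon)$ and the total is $O(\memory+k/\epsilon)$.

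\emph{Approximation (i).} Put $S^{*}=\argmax_{|A|\le k}f(A)$, $\Opt=f(S^{*})$, $\tau^{*}=\Opt/(2k)$, and let $\Delta,\LB$ denote their final values, so $\max(\Delta,\LB)\le\Opt\le k\Delta$. Fix the grid point $\tau$ with $\tau^{*}/(1+\epsilon)<\tau\le\tau^{*}$: it satisfies $\tau\ge(1-\epsilon)\tau^{*}$, is never discarded (the final $\tau_{\min}$ is at most $\tau^{*}/(1+\epsilon)<\tau$), and is eventually instantiated since $\Delta\ge\Opt/k>\tau$. If $|S_\tau|<k$, every call of \AlgSampling on threshold $\tau$ ended by emptying the buffer, so each $e\in S^{*}\setminus S_\tau$ is either filtered at a step where $f(\{e\}\mid S')<\tau\le\tau^{*}$ for some $S'\subseteq S_\tau$, or arrived before $\tau$ was live (so $f(\{e\})<\tau^{*}$); either way $f(\{e\}\mid S_\tau)<\tau^{*}$ by submodularity, and the \cref{thm:sieve-stream++} computation yields $f(S_\tau)\ge\Opt/2$. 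If $|S_\tau|=k$, I would expand $f(S_\tau)$ as the telescoping sum of the marginals $f(T\mid S)$ of the singletons/batches appended to it: every singleton added in Lines~\ref{line:loop1:begin}--\ref{line:loop1:end} and every batch passing the test of Line~\ref{line:test-margin} contributes $\ge(1-\epsilon)\tau$ per element, and the only ``bad'' elements lie in the at most one terminal batch per while-iteration (the one triggering Line~\ref{line:break-insertion} or completing $S$ to size $k$). The new ingredient is that the geometric schedule $t_i=\floor{(1+\epsilon)^{i+1}-(1+\epsilon)^{i}}$ with $i_0=\floor{\log_{1+\epsilon}(1/\epsilon)}$ forces that terminal batch to have size $O(\epsilon)$ times the number of good elements appended in its own while-iteration; summing, the bad elements number $O(\epsilon k)$, hence $f(S_\tau)\ge(1-\epsilon)\tau\,(1-O(\epsilon))k$, and a short computation using $\tau\ge\tau^{*}/(1+\epsilon)$ and $\Opt=2k\tau^{*}$ gives $f(S_\tau)\ge(\tfrac12-\tfrac{3\epsilon}{2})\Opt$. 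Since the algorithm returns $\argmax_{S_\tau}f(S_\tau)$, (i) follows.

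\emph{Adaptivity (iii).} The buffer is flushed each time it reaches $\threOne\memory$ elements, so there are $O(N/\memory)$ flushes. Within a flush, the calls $\AlgSampling(f_{S_\tau},\cdot)$ for distinct $\tau$ issue queries depending only on $S_\tau$ and buffer elements, so they are mutually non-adaptive and may be interleaved round by round; hence the adaptivity of a flush equals that of one \AlgSampling call. Inside \AlgSampling, a single pass of the while loop uses one round to filter (Line~\ref{line:filter}), $O(1/\epsilon)$ rounds for the singleton loop (Lines~\ref{line:loop1:begin}--\ref{line:loop1:end}), and $O(\log_{1+\epsilon}k)=O(\log k/\epsilon)$ rounds for the geometric loop (Lines~\ref{line:loop2:begin}--\ref{line:loop2:end}), i.e.\ $O(\log k/\epsilon)$ rounds in total; so it remains to show the while loop runs $O(\log\memory)$ times in expectation. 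I would derive this from the core lemma: conditioned on the state at the start of a while-iteration whose post-filter buffer has $m$ elements, with probability at least a universal constant either $|S|$ reaches $k$ or the post-filter buffer at the next iteration has at most $(1-c)m$ elements ($c$ a universal constant). Granting this, a standard bound on the number of constant-factor shrinkage steps gives $O(\log\memory)$ iterations in expectation, and multiplying the three factors yields $O\!\big(\tfrac{N\log\memory\log k}{\memory\,\epsilon}\big)$.

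\emph{Main obstacle.} Everything hinges on the core lemma. View the elements examined in a while-iteration as a uniformly random permutation of the filtered buffer. The geometric doubling of batch sizes (with the singleton loop covering prefix lengths below $1/\epsilon$) guarantees that the prefix actually appended to $S$ differs by at most a $(1+\epsilon)$ factor from the longest prefix whose average marginal to $S$ stays above $(1-\epsilon)\tau$; this is also what underlies the $O(\epsilon k)$ count in (i). Hence when \AlgSampling breaks out of the inner loops it has just appended a \emph{uniformly random} subset $T$ of the current pool whose average marginal gain is $\le(1-\epsilon)\tau$, and since $T$ is a uniform subsample, submodularity together with a Markov-type (residual-value) estimate forces a constant fraction of the pool to have marginal below $\tau$ and thus to be filtered next round. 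The points I expect to consume most of the effort are: handling the regime where the residual pool is already small, so that the singleton loop rather than the geometric loop empties it; absorbing the floors in the $t_i$; and upgrading ``expected constant-fraction shrinkage'' to the probability-$\Omega(1)$ statement the shrinkage count requires.
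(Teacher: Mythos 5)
Your parts (i) and (ii) are essentially the paper's own argument: the same grid point $\tau$ with $(1-\epsilon)\tau^*\le\tau<\tau^*$, the same two cases on $|S_\tau|$, the same observation that only the one terminal batch per while-iteration can be ``bad'' and that the singleton loop plus the geometric schedule cap its size at an $\epsilon$ fraction of what was already picked in that iteration, and the same threshold-set memory count as in \cref{thm:sieve-stream++}. The top-level accounting in (iii) (number of flushes, times $O(\log k/\epsilon)$ rounds per while-iteration, times the expected number of while-iterations, with the different thresholds' calls parallelized) also matches. The genuine gap is the core shrinkage lemma itself, i.e.\ the statement that each while-iteration of \AlgSampling removes a constant fraction of $\buffer$ with constant probability (the paper's \cref{lemma:filtering-rounds}); this is the entire technical content of the adaptivity bound, it is what the paper proves in its appendix, and your proposal both leaves it unproven and sketches a route that does not work as stated.

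Concretely, your sketch conditions on the break and claims that a uniform batch $T$ with $f(T\mid S)/|T|\le(1-\epsilon)\tau$ ``forces a constant fraction of the pool to have marginal below $\tau$'' via a Markov-type estimate. That implication runs in the wrong direction. Low average gain only certifies that at least $\epsilon|T|$ elements of the batch itself have sequential marginal below $\tau$; the relevant badness is measured against $S\cup T$, which depends on the realized sample, so uniformity of $T$ cannot transfer this fraction to the pool: a batch can be atypically bad while the pool is almost entirely good, and exactly in that event the next pass of \cref{line:filter} removes almost nothing. The paper's proof argues the contrapositive probabilistically: track the fraction of the (post-filter) pool whose marginal to the evolving $S$ is below $\tau$; once this fraction reaches $\epsilon/2$ the iteration is guaranteed not to fail, because marginals only decrease and those elements will be filtered; while it stays below $\epsilon/2$, each sampled element is bad with conditional probability at most $\epsilon/2$, so triggering \cref{line:break-insertion} at batch size $t$ requires at least $\epsilon t$ bad draws, which by the Chernoff bound of \cref{lem:chernoff} has probability at most $e^{-\epsilon t/10}$ (and at most $\epsilon/2$ per step of the singleton loop). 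Multiplying $(1-P_z)$ over the $\ceil{1/\epsilon}+O(\log k)$ steps gives a constant (for constant $\epsilon$) probability of removing an $\epsilon/2$ fraction, and then the expected-buffer-size plus Markov step yields the $O(\log\memory)$ expected iterations. Note also that your version of the lemma is stronger than what is true: the filtered fraction per successful iteration is $\epsilon/2$, not a universal constant independent of $\epsilon$, which is sufficient only because $\epsilon$ is assumed constant. Without this conditional/Chernoff argument (or an equivalent), part (iii) of the theorem is not established; you yourself flag ``upgrading expected shrinkage to probability-$\Omega(1)$'' as open, and that upgrade is precisely the missing proof.
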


\begin{proof}
	\textbf{Approximation Guarantee:} 
	Assume $\buffer$ is the set of elements buffered from  the stream $\ground$. 
	Let's define $S^{*} = \argmax_{A \subseteq \ground, |A| \leq k} f(A)$, $\Opt = f(S^{*})$ and $\tau^* = \frac{\Opt}{2k}$.
	Similar to the proof of \cref{thm:sieve-stream++}, we can show that \AlgHybrid considers a range of thresholds such that for one of them (say $\tau$) we have $\frac{\Opt(1-\epsilon)}{2k} \leq \tau < \frac{\Opt}{2k}$.
	In the rest of this proof, we focus on $\tau$ and its corresponding set of picked items $S_\tau$. 
	For set $S_\tau$ we have two cases:
	\begin{itemize}
		\item if $|S_{\tau}| < k$, we have:
		\begin{align*}
		\Opt  \leq f(S^* \cup S_{\tau}) & = f(S_{\tau}) + f(S^* \mid S_{\tau})  \\
	&	\overset{(a)}{\leq}  f({S_\tau}) + \sum_{e \in S^* \setminus S_{\tau}} f(\{e\} \mid S_{\tau}) \\
	&\overset{(b)}{\leq} f({S_\tau}) + k \tau^* \\
	&	= f({S_\tau}) +  \dfrac{\Opt}{2},
		\end{align*}
		where inequality $(a)$ is correct because $f$ is a submodular function.  And inequality $(b)$ is correct because each element of the optimal set $S^*$ that is not picked by the algorithm, i.e., it had discarded in the filtering process, has had a marginal gain of less than $\tau < \tau^*$.
		
		\item if $|S_{\tau}| = k$: 
		Assume the set $S_{\tau}$ of size $k$ is sampled in $\ell$ iterations of the while loop in Lines \ref{line:filter-loop}--\ref{line:end-loop} of \cref{alg:Sampling}, and $T_i$ is the union of sampled batches  in the $i$-th iteration of the while loop. 
		Furthermore, let $T_{i,j}$ denote the $j$-th sampled batch in the $i$-th iteration of the while loop.
		We define $S_{\tau, i, j} = \bigcup_{i, j} T_{i,j}$, i.e, $S_{\tau, i, j}$ is the state of set $S_\tau$ after the $j$-th batch insertion in the $i$-th iteration of the while loop.
		We first prove that the average gain of each one of these sets $T_i$ to the set $S_{\tau}$ is at least $(1-2\epsilon) \cdot |T_i|$.
		
		To lower bound the average marginal gain of $T_i$, for each $T_{i,j}$ we consider three different cases: 
		\begin{itemize}
			\item  the while loop breaks at Line~\ref{line:break-one} of \cref{alg:Sampling}: 
			We know that the size of all $T_{i,j}$ is one in this case.
			It is obvious that $ f(T_{i,j}\mid S_{\tau, i, j - 1}) \geq (1 - \epsilon) \cdot \tau.$
			
			\item   \AlgSampling passes the first loop and does not break at Line~\ref{line:break-insertion}, i.e., it continues to pick items till $S_{\tau} = k$ or  the buffer memory is empty:  again it is obvious that \[ f(T_{i,j} \mid S_{\tau, i, j-1}) \geq (1 - \epsilon) \cdot \tau \cdot |T_{i,j}|\]
			This is true because when set $T_{i,j}$ is picked, it has passed the test at Line~\ref{line:test-margin}.
			Note that it is possible the algorithm breaks at Line~\ref{line:check-size} without passing the test at Line~\ref{line:test-margin}.
			If the average marginal gain of the sampled set $T_{i,j}$ is more than $(1 - \epsilon) \cdot \tau$ then the analysis would be exactly the same as the current case. Otherwise, we handle it similar to the next case where the sampled batch does not provide the required average marginal gain.
			
			\item  it passes the first loop and breaks at Line~\ref{line:break-insertion}. We have the two following observations:
			\begin{enumerate}
				\item in the current while loop, from the above-mentioned cases, we conclude that the average marginal gain of all the element picked before the last sampling is at least $(1-\epsilon) \cdot \tau$, i.e.,
				\[\forall r, 1 \leq r < j : f(T_{i,r} \mid S_{\tau, i, r-1}) \geq (1 - \epsilon) \cdot \tau \cdot |T_{i,r}|.\]
				\item the number of elements which are picked at the latest iteration of the while loop is at most $\epsilon$ fraction of all the elements picked so far (in the current while loop), i.e., $|T_{i,j}| \leq \epsilon \cdot |\bigcup_{1 \leq r < j} T_{i,r}|$ and $|T_i| \leq (1+ \epsilon) \cdot |\bigcup_{1 \leq r < j} T_{i,r}|$. 
				Therefore, from the monotonicity of $f$, we have
				\begin{align*}
				f(\bigcup_{1 \leq r \leq  j} T_{i,r} \mid S) 
			 &	\geq f(\bigcup_{1 \leq r < j} T_{i,r} \mid S)  \\
			&	\geq  (1 - \epsilon) \cdot \tau \cdot  \mid \bigcup_{1 \leq r < j} T_{i,r} | \\
			&	\geq \dfrac{|T_i| \cdot \tau \cdot (1 - \epsilon)}{(1+\epsilon)}  \\
			&	\geq (1 - 2\epsilon) \cdot \tau \cdot |T_i|.
					\end{align*}
			\end{enumerate}
		\end{itemize}
		To sum-up, we have
		\begin{align*}
		f(S_{\tau})  = \sum_{i=1}^{\ell}  f(T_i \mid S_{\tau, i-1}) \geq \sum_{i=1}^{\ell} (1 - 2\epsilon) \cdot \tau \cdot |T_i|  = (1 - 2\epsilon)\cdot \tau \cdot k \geq  \left(\frac{1}{2} - \frac{3 \epsilon}{2} \right) \cdot \Opt.
		\end{align*}
	\end{itemize}

	\textbf{Memory complexity} In a way similar to analyzing the memory complexity of \AlgSieveStreamingPlus,  we conclude that the required memory of \AlgHybrid in order to store solutions for different thresholds is also $O(\frac{k}{\epsilon})$. Since we buffer at most $\memory$ items, the total memory complexity is $O(\memory + \frac{k}{\epsilon})$.

	\paragraph{Adaptivity Complexity of \AlgSampling}
	
	To guarantee the adaptive complexity of our algorithm, we first upper bound the expected number of iterations of the while loop in Lines \ref{line:filter-loop}--\ref{line:end-loop} of \cref{alg:Sampling}.
	\begin{lemma}\label{lemma:filtering-rounds}
		For any constant $\epsilon > 0$, the expected number of iterations in the while loop of Lines \ref{line:filter-loop}--\ref{line:end-loop} of \AlgSampling is $O(\log(|\buffer|))$ where $\buffer$ is the set of buffered elements passed to \AlgSampling. 
	\end{lemma}

	We defer the proof of \cref{lemma:filtering-rounds} to \cref{sec:lemma-proof}. Next, we discuss how this lemma translates to the total expected adaptivity of \AlgHybrid.
	
	There are at most $\ceil{\frac{1}{\epsilon}} + \log_{1+\epsilon}k =O(\frac{\log k}{\epsilon})$ adaptive rounds in each iteration of the while loop of \cref{alg:Sampling}. 
	So, from \cref{lemma:filtering-rounds} we conclude that the expected adaptive complexity of each call to \AlgSampling is $O(\frac{\log(|\buffer|)\log (k)}{\epsilon})$.
	To sum up, the general adaptivity of the algorithm takes its maximum  value when the number of times a buffer gets full is the most, i.e., when $|\buffer| = \threOne \times \memory$ for $\frac{N}{\threOne \times \memory}$ times.
	We assume $\threOne$ is constant. Therefore, the expected adaptive complexity is $O(\frac{N \log (\memory) \log (k)}{\memory \epsilon} )$.
\end{proof}

\noindent \textbf{Remark}
It is important to note that \AlgSampling is inspired by recent  progress for maximizing submodular functions with low adaptivity  \citep{fahrbach2018submodular, balkanski2018exponential, ene2018submodular} but it uses a few new ideas to adapt the result to our setting. 
Indeed, if we had used the sampling routines from these previous works, it was even possible to slightly improve the adaptivity of the hybrid model. 
The main issue with these methods is that their adaptivity heavily depends on evaluating many random subsets of the ground set in each round.
As it is discussed in the next section, we are interested in algorithms that are efficient in  the multi-source setting. 
In that scenario, the data is distributed among several machines, so existing sampling methods dramatically increases the communication cost of our hybrid algorithm.

\section{Multi-Source Data Streams} \label{sec:multisource-alg}

In general, the important aspects to consider for a single source streaming algorithm are approximation factor, memory complexity, and adaptivity.
In the multi-source setting, the communication cost of an algorithm also plays an important role. 
While the main ideas of \AlgSieveStreamingPlus  give us an optimal approximation factor and memory complexity, there is always a trade-off  between adaptive complexity and communication cost in any threshold sampling procedure.

As we discussed before, existing submodular maximization algorithms with low adaptivity need to evaluate the utility of random subsets several times to guarantee the marginal gain of sampled items. 
Consequently, this incurs high communication cost.
In this section, we explain how \AlgHybrid can be generalized to the multi-source scenario with both low adaptivity and low communication cost.

We assume elements arrive from $m$ different data streams and for each stream the elements are placed in a separate machine with a buffer $\buffer_i$. %are buffered in and elements of each stream are buffered in a separate machine.
When the buffer memory of at least one of these $m$ machines is $\threOne\%$ full, the process of batch insertion and filtering begins.
The only necessary change to $\AlgHybrid$ is to use a parallelized version of \AlgSampling with inputs from $\{ \buffer_{i}\}$.
In this generalization, Lines \ref{line:sample-one} and \ref{line:pick-batch} of \cref{alg:Sampling} are executed in a distributed way where the goal is to perform the random sampling procedure from the buffer memory of all machines.
Indeed, in order to pick a batch of $t$ random items, the central coordinator asks each machine to send a pre-decided number of items.
Note that the set of picked elements $S_\tau$ for each threshold $\tau$ is shared among all machines.
And therefore the filtering step at Line~\ref{line:filter} of \cref{alg:Sampling} can be done independently for each stream in only one adaptive round.
Our algorithm is shown pictorially in Figure \ref{fig:schematic}.
\cref{thm:batch-multi-streaming} guarantees the communication cost of \AlgHybrid in the multi-source setting. Notably, the communication cost of our algorithm is independent of the buffer size $\memory$ and the total number of elements $N$.

\begin{figure}[htb!] 
	\centering     %%% not \center
\includegraphics[width=110mm]{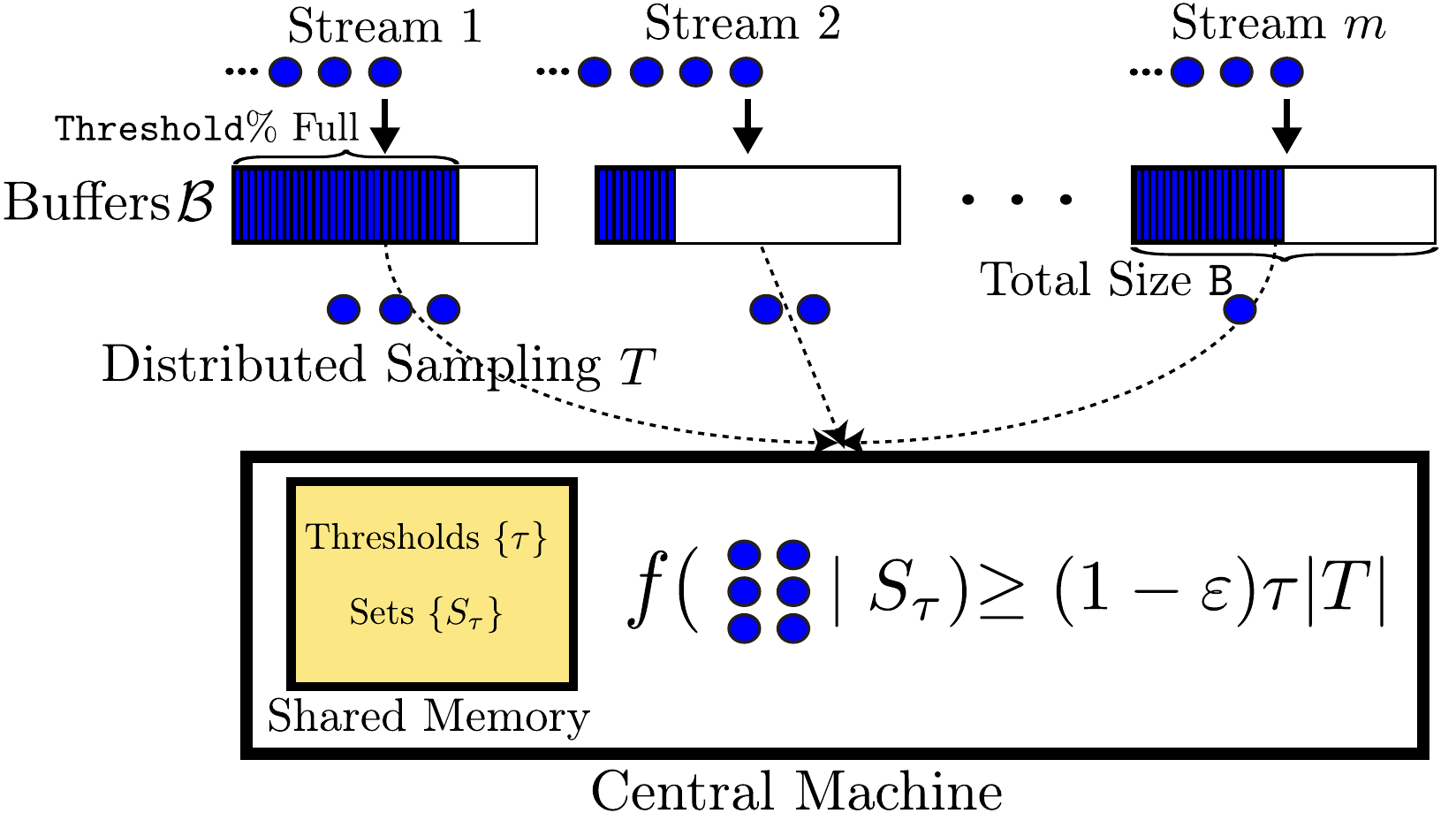}
\caption{The schematic representation of our proposed hybrid algorithm: 
	there are $m$ simultaneous streams where data from each stream is buffered separately. 
	When a buffer is \threOne\% full, a central machine starts the sampling process.
	The thresholds $\{\tau\}$ and sets $\{S_\tau \}$ are stored in a shared memory.
	First, for each threshold $\tau$, all elements with marginal gain less than $\tau$ are discarded from the buffers. Then the central machine randomly samples $t$ items $T$ (with geometrically increasing values of $t$) from the buffers of all streams and adds them to set $S_\tau$ if their average marginal gain is at least $(1-\epsilon)\tau$. 
The sampling procedure stops when the average value of randomly picked items is not good enough.
These iterative steps are performed until $k$ items are picked or the buffer memories of all machines are emptied.\label{fig:schematic}
}
\end{figure}

\begin{theorem} \label{thm:batch-multi-streaming}
	For a non-negative and monotone submodular function $f$ in a multi-source streaming setting subject to a cardinality constraint $k$, define $\Delta_0$ as the largest singleton value when for the first time a buffer gets full, and $\ratio = \frac{{\textup{\Opt}}}{\Delta_{0}}$.
	The total communication cost of \AlgHybrid is $O(\frac{k \log \ratio}{\epsilon^2})$.
\end{theorem}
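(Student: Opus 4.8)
The plan is to pin down exactly where bits cross a machine boundary and then charge every crossing to an element that eventually enters some $S_\tau$. By design the thresholds $\{\tau\}$ and the picked sets $\{S_\tau\}$ live in shared memory, and the filtering step in Line~\ref{line:filter} of \cref{alg:Sampling} is performed locally on each stream after reading the current $S_\tau$ off the shared memory; so the only communication is the distributed random sampling of Lines~\ref{line:sample-one} and~\ref{line:pick-batch}, in which each machine ships a prescribed number of its buffered items to the coordinator. Hence, up to constants, the communication cost equals the total number of element identifiers ever transmitted during sampling over the whole run of \AlgHybrid. I would bound this by organizing the count per threshold: $(\text{communication}) \le \sum_{\tau}\bigl(\text{number of elements ever sampled on behalf of }\tau\bigr)$, and then control each summand and the number of summands separately.

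\textbf{Per-threshold bound.} Fix a threshold $\tau$ and consider all invocations $\AlgSampling(f_{S_\tau},\buffer,k-|S_\tau|,\tau,\epsilon)$ across the $O(N/\memory)$ buffer-fill events (treating $\threOne$ as a constant). The structural fact I would use is that a call to \AlgSampling cannot waste many samples relative to the elements it inserts into $S_\tau$: every iteration of the while loop of Lines~\ref{line:filter-loop}--\ref{line:end-loop} begins by filtering $\buffer$ down to elements with marginal gain at least $\tau$, so the first singleton drawn in Line~\ref{line:sample-one} of that iteration automatically passes the $(1-\epsilon)\tau$ test and is added to $S_\tau$ in Line~\ref{line:pick-one}; consequently every executed while iteration inserts at least one new element and discards at most one sampled element (the one triggering the break in Line~\ref{line:break-one}), whereas in the geometric loop every sampled batch is appended to $S_\tau$ (Lines~\ref{line:break-insertion} and~\ref{line:end-loop}) or returned in Line~\ref{line:check-size}. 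Since the wasted sample of a break iteration is accompanied by an insertion inside that same iteration, the number of elements sampled in one call is at most a constant times the number it inserts, plus the $O(\epsilon k)$ slack of a final oversized batch. Summing over all calls for this $\tau$ and using that the cardinality budget handed to \AlgSampling is $k-|S_\tau|$, the total number of insertions into $S_\tau$ is at most $k$; so the total number of elements sampled on behalf of $\tau$ is $O(k/\epsilon)$, and crucially this is independent of $N$ and of $\memory$.

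\textbf{Number of thresholds and conclusion.} It remains to bound how many distinct grid points $(1+\epsilon)^i$ ever receive a non-trivial call. On the high end, monotonicity together with $|S^*|\le k$ implies no buffered element can have marginal gain exceeding $\Opt$, so a threshold above $\Opt$ never accumulates an element. On the low end, the smallest threshold ever instantiated is $\tau_{\min}$ at the first buffer-fill, which is $\Theta(\Delta_0/k)$ because $\LB=0$ and $\Delta=\Delta_0$ there. Thus the relevant thresholds form a geometric grid spanning a factor $O(k\,\ratio)$, i.e.\ $O(\log(k\ratio)/\epsilon)=O(\log\ratio/\epsilon)$ of them, the $\log k$ being absorbed exactly as in the threshold counting of \cref{thm:sieve-stream++}. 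Multiplying the per-threshold bound $O(k/\epsilon)$ by this count gives total communication $O(k\log\ratio/\epsilon^2)$, with no dependence on $N$, on $\memory$, or on the number of sources $m$.

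\textbf{Main obstacle.} The delicate part is the charging argument of the middle step: one must verify that all the sampling overhead --- wasted draws, trailing oversized batches, and repeated re-filtering within a single call --- amortizes against the at most $k$ elements eventually placed in $S_\tau$, so that the communication does not secretly acquire a factor $N/\memory$ even though \AlgSampling is invoked that many times per threshold. A secondary subtlety is confirming that deciding \emph{which} machine supplies each sampled item costs only $O(1)$ per item (for instance because the buffer occupancies are kept in shared memory), rather than $O(m)$ per sampling round.
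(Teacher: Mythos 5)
Your accounting is organized differently from the paper's (the paper bounds the number of times the \emph{set of active thresholds} changes, $O(\nicefrac{\log\ratio}{\epsilon})$, and multiplies by the $O(\nicefrac{k}{\epsilon})$ snapshot bound on $\sum_\tau|S_\tau|$ from \cref{thm:sieve-stream++}; you charge communication per threshold over its whole lifetime), and the first two steps of your route are sound. In particular your ``main obstacle'' dissolves: in \cref{alg:Sampling} every batch drawn at Line~\ref{line:pick-batch} is added to $S_\tau$ whether or not it passes the test at Line~\ref{line:test-margin} (Lines~\ref{line:check-size}, \ref{line:break-insertion}, \ref{line:end-loop}), and batch sizes are capped by $k-|S|$ and $|\buffer\setminus S|$, so there is no ``oversized batch'' slack at all; the only wasted draws are the failing singletons of Line~\ref{line:break-one}, at most one per while-iteration and each matched to an insertion earlier in the same iteration (the first draw after the filter at Line~\ref{line:filter} always passes). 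Hence communicated elements are at most twice the insertions, with no hidden $\nicefrac{N}{\memory}$ factor --- this is exactly the paper's ``at most twice the number of elements ever in some $S_\tau$'' observation, and since a discarded threshold never reappears ($\tau_{\min}$ is nondecreasing), each threshold accounts for at most $2k$ communicated elements over the entire run.

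The genuine gap is the final counting step. The thresholds ever instantiated span the geometric grid from $\Theta(\nicefrac{\Delta_0}{k})$ up to $\Delta_{\max}\le\Opt$, so their number is $\Theta\bigl(\nicefrac{(\log k+\log\ratio)}{\epsilon}\bigr)$; the asserted identity $O(\nicefrac{\log(k\ratio)}{\epsilon})=O(\nicefrac{\log\ratio}{\epsilon})$ is simply false (take $\ratio=O(1)$), and the $\log k$ cannot be ``absorbed'' by any statement about the \emph{count} of thresholds. Multiplying your uniform per-threshold bound by that count only yields $O\bigl(\nicefrac{k(\log k+\log\ratio)}{\epsilon}\bigr)$. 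The absorption in \cref{thm:sieve-stream++} works through the \emph{sizes}, not the count, and that is what you must import: since every while-iteration block added to $S_\tau$ has average marginal gain at least $(1-2\epsilon)\tau$ (as in the proof of \cref{thm:batch-streaming}), the lifetime number of insertions into $S_\tau$ is $O\bigl(\min\{k,\nicefrac{\Opt}{\tau}\}\bigr)$, not just $k$. Summing this threshold-dependent bound over the grid, the thresholds with $\tau\ge\nicefrac{\Opt}{k}$ (there are $O(\nicefrac{\log k}{\epsilon})$ of them) contribute a geometric series totalling $O(\nicefrac{k}{\epsilon})$, while the $O\bigl(\nicefrac{(1+\log\ratio)}{\epsilon}\bigr)$ thresholds in $[\Theta(\nicefrac{\Delta_0}{k}),\nicefrac{\Opt}{k}]$ contribute $O(k)$ each. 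With this repair your per-threshold route goes through and in fact gives $O\bigl(\nicefrac{k(1+\log\ratio)}{\epsilon}\bigr)$, which is even slightly stronger in $\epsilon$ than the paper's $O(\nicefrac{k\log\ratio}{\epsilon^2})$; but as written, the step that discards the $\log k$ is exactly the one you did not prove.
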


\begin{proof}
For $m$ different data streams, assume $\buffer_{i}$ is the set of elements buffered from  the $i$-th stream. 
We define $\buffer$ to be the union of all elements from all streams.
The communication cost of \AlgHybrid in the multi-source setting is the total number of elements sampled (in a distributed way) from all sets $\{\buffer_{i}\}_{1 \leq i \leq m}$ in Lines~\ref{line:sample-one} and \ref{line:pick-batch} of \cref{alg:Sampling}.

As a result, we can conclude that the communication cost is at most twice the number of elements  has been in a set $S_\tau$ at a time during the run of the algorithm.
To see the reason for this argument, note that because the filtering step happens just before the for loop of Lines~\ref{line:loop1:begin}--\ref{line:loop1:end}, the first picked sample in this for loop always passes the test and is added to $S_\tau$. 
Furthermore, all the items sampled  at Line~\ref{line:pick-batch}, irrespective of their marginal gain, are added to $S_\tau$.
So, in the worst case scenario, the communication complexity is maximum when
the for loop breaks always
at the second instance of the sampling process of Line~\ref{line:sample-one} (after one successful try).
Therefore, we only need to upper bound the total number of elements which at some point has been in a set $S_\tau$ at one of the calls to \AlgSampling.

The first group of thresholds  the \AlgHybrid algorithm considers  the interval $[\nicefrac{\Delta_{0}}{(2k)}, \Delta_{0}]$, where in the beginning we have $\LB = \Delta_{0}$.
Following the same arguments as the proof of \cref{thm:sieve-stream++}, we can show that if neither $\LB$ nor $\Delta$ changes, the total number of elements in sets $\{ S_\tau \}$ is $O(\frac{k}{\epsilon})$.
We define $\Delta_{\max}$ to be the largest singleton element in the whole data streams.
The number of times the interval of thresholds changes because of the change in $\Delta$ is $\log_{1+\epsilon}( \nicefrac{\Delta_{\max}}{\Delta_0} )$.
Furthermore, by changes in $\LB$ some thresholds and their corresponding sets are deleted and new elements might be added.
The number of times $\LB$ changes is upper bounded by $\log_{1+\epsilon}( \nicefrac{\Opt}{\Delta_0} )$.
Note that we have $\Delta_{\max} \leq \Opt$.
From the fact that the number of changes in the set of thresholds is upper bounded by $\log_{1+\epsilon}( \nicefrac{\Delta_{\max}}{\Delta_0})  = O(\frac{\log \ratio}{\epsilon})$ and the number of elements in $\{S_\tau\}$ at every step of the algorithm is $O(\frac{k}{e})$,
we conclude the total communication cost of \AlgHybrid is 
$O(\frac{k \log \ratio}{\epsilon^2})$.
\end{proof}
\section{Experiments}

In these experiments, we have three main goals:
\begin{enumerate}
\item For the single-source streaming scenario, we want to demonstrate the memory efficiency of \AlgSieveStreamingPlus relative to \AlgSieveStreaming.
\item For the multi-source setting, we want to showcase how \AlgHybrid requires the fewest adaptive rounds amongst algorithms with optimal communication costs.
\item Lastly, we want to illustrate how a simple variation of \AlgHybrid can trade off communication cost for adaptivity, thus allowing the user to find the best balance for their particular problem.
\end{enumerate}

\subsection{Datasets}

These experiments will be run on a Twitter stream summarization task and a YouTube Video summarization task, as described next. 

\paragraph{Twitter Stream Summarization} \label{sec:twitter}
In this application, we want to produce real-time summaries for Twitter feeds. As of January 2019, six of the top fifty Twitter accounts (also known as ``handles'') are dedicated primarily to news reporting. Each of these handles has over thirty million followers, and there are many other news handles with tens of millions of followers as well. Naturally, such accounts commonly share the same stories. Whether we want to provide a periodic synopsis of major events or simply to reduce the clutter in a user's feed, it would be very valuable if we could produce a succinct summary that still relays all the important information. 

To collect the data, we scraped recent tweets from 30 different popular news accounts, giving us a total of 42,104 unique tweets. In the multi-source experiments, we assume that each machine is scraping one page of tweets, so we have 30 different streams to consider. 

We want to define a submodular function that covers the important stories of the day without redundancy. 
To this end, we extract the keywords from each tweet and weight them proportionally to the number of retweets the post received.
 In order to encourage diversity in a selected set of tweets, we take the square root of the value assigned to each keyword. 
More formally, consider a function $f$ defined over a ground set $\ground$ of tweets.
Each tweet $e \in \ground$ consists of a positive value $\text{val}_e$ denoting its number of retweets and a set of $\ell_e$ keywords $W_e = \{ w_{e,1}, \cdots, w_{e, \ell_e}\}$ from a general set of keywords $\cW$.
The score of a word $w \in W_e $ for a tweet $e$ is defined by $\text{score}(w,e) = \text{val}_e$.
If $w \notin W_e $, we  define $\text{score}(w,e) = 0$.
For a set $S \subseteq V$ of tweets, the function $f$ is defined as follows:
\begin{align*} \label{eq:function-twitter}
	f(S) = \sum_{w \in \cW} \sqrt{\sum_{e \in S} \text{score}(w,e)}.
\end{align*}
Figure \ref{tweetExample} demonstrates how we calculate the utility of a set of tweets.
 In \cref{sec:twitter-function}, we give proof of submodularity of this function.

\begin{figure}[htb!] 
	\centering     %%% not \center
	\fbox{\includegraphics[width=74mm]{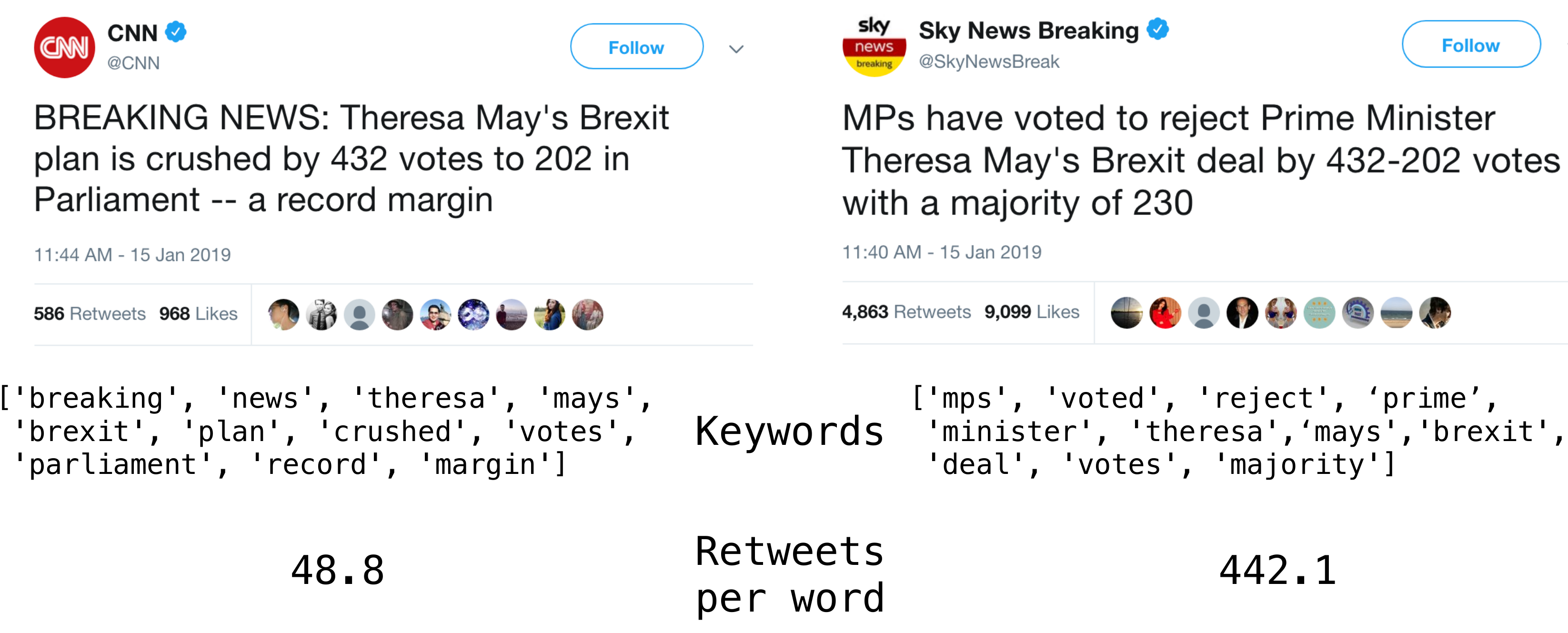}} \vspace{0.1in}
	\quad
	\fbox{\includegraphics[width=79mm]{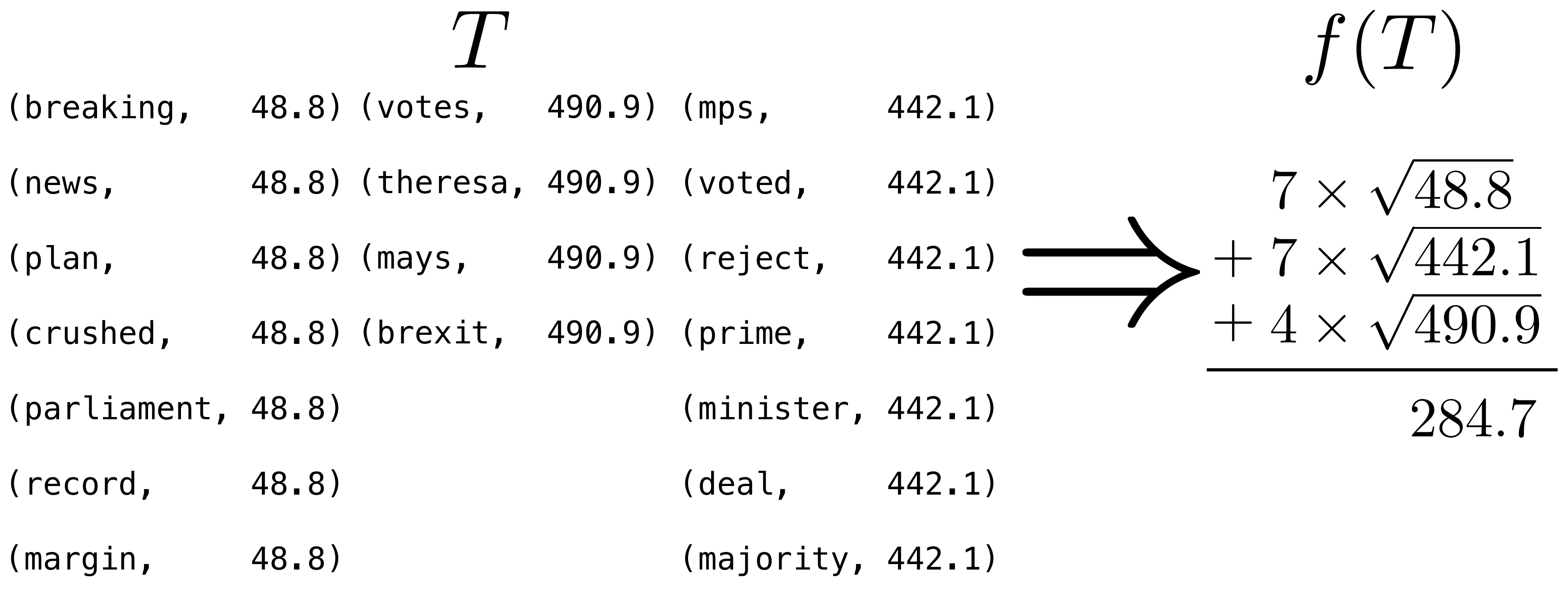}}
	\caption{At the top, we show two tweets on the same subject from different accounts. We first extract the list of keywords, as well as the number of retweets per word. We combine these into a single list $T$ of (keyword, score) pairs and then pass this list through our submodular function $f$.}
	\label{tweetExample}
\end{figure}

\paragraph{YouTube Video Summarization} \label{sec:youtube}

In this second task, we want to select representative frames from multiple simultaneous and related video feeds. In particular, we consider YouTube videos of New Year's Eve celebrations from ten different cities around the world. Although the cities are not all in the same time zone, in our multi-source experiments we assume that we have one machine processing each video simultaneously.

Using the first 30 seconds of each video, we train an autoencoder that compresses each frame into a 4-dimensional representative vector. Given a ground set $\ground$ of such vectors, we define a matrix $M$ such that $M_{ij} = e^{-\text{dist}(v_i,v_j)}$, where $\text{dist}(v_i,v_j)$ is the euclidean distance between vectors $v_i, v_j \in \ground$. Intuitively, $M_{ij}$ encodes the similarity between the frames represented by  $v_i$ and $v_j$.

The utility of a set $S \subseteq \ground$ is defined as a  non-negative monotone submodular objective $f(S) = \log \det( I + \alpha M_S )$, where $I$ is the identity matrix, $\alpha > 0$ and $M_S$ is the principal sub-matrix of $M$ indexed by $S$ \citep{herbrich2003fast}. Informally, this function is meant to measure the diversity of the vectors in $S$. 
Figure \ref{sampleFrames} shows the representative images selected by our \AlgHybrid algorithm for $k= 8$.

\begin{figure}[htb!] 
	\centering     %%% not \center
\includegraphics[width=140mm]{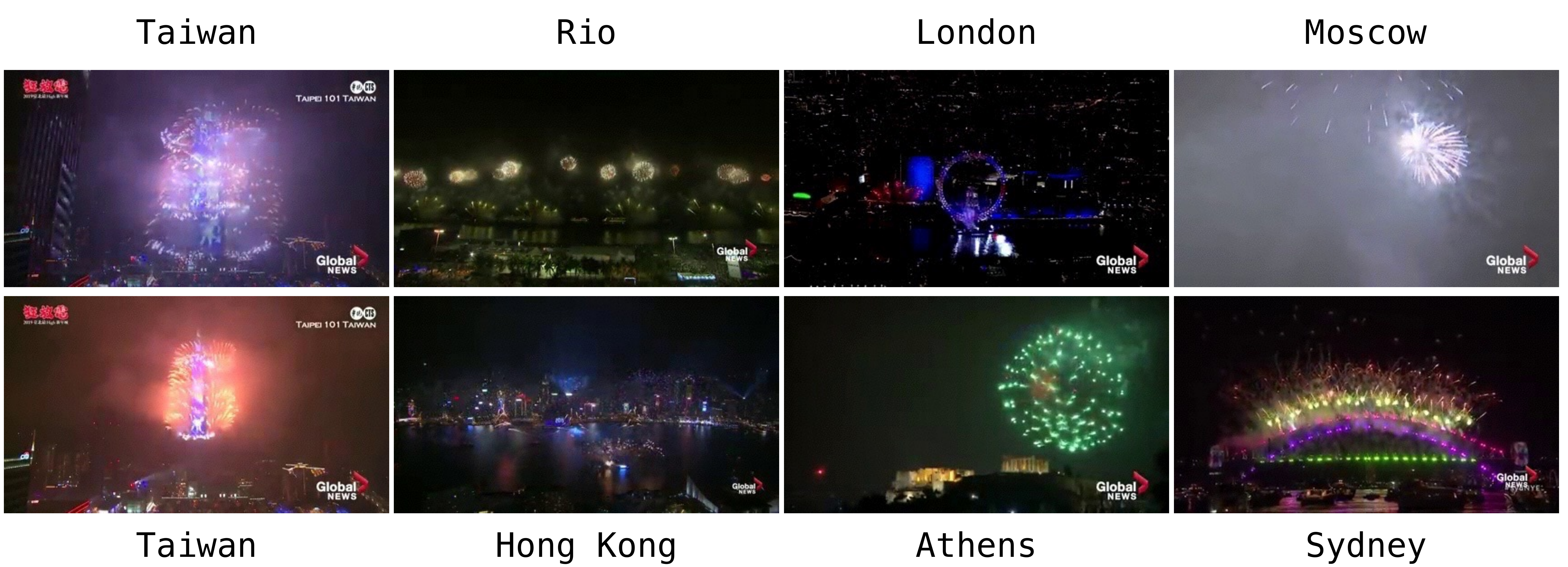}
	\vspace{-8pt}
\caption{Eight representative frames chosen by \AlgHybrid from ten different simultaneous feeds of New Year's Eve fireworks from around the world.}
\label{sampleFrames}
\end{figure}

\begin{figure}[tb!] 
	\centering     %%% not \center
\subfloat[]{\includegraphics[height=31mm]{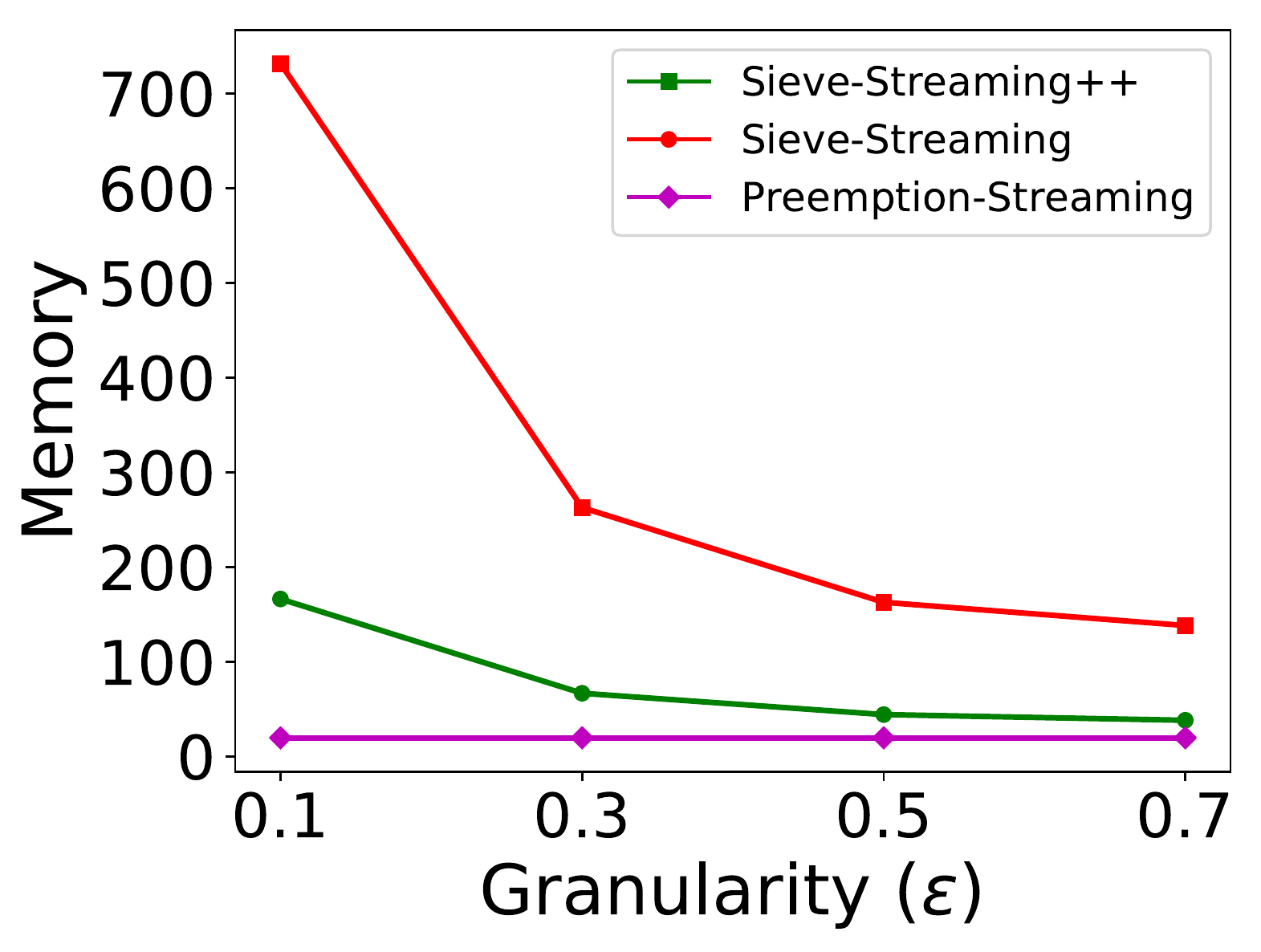}}
\subfloat[]{\includegraphics[height=31mm]{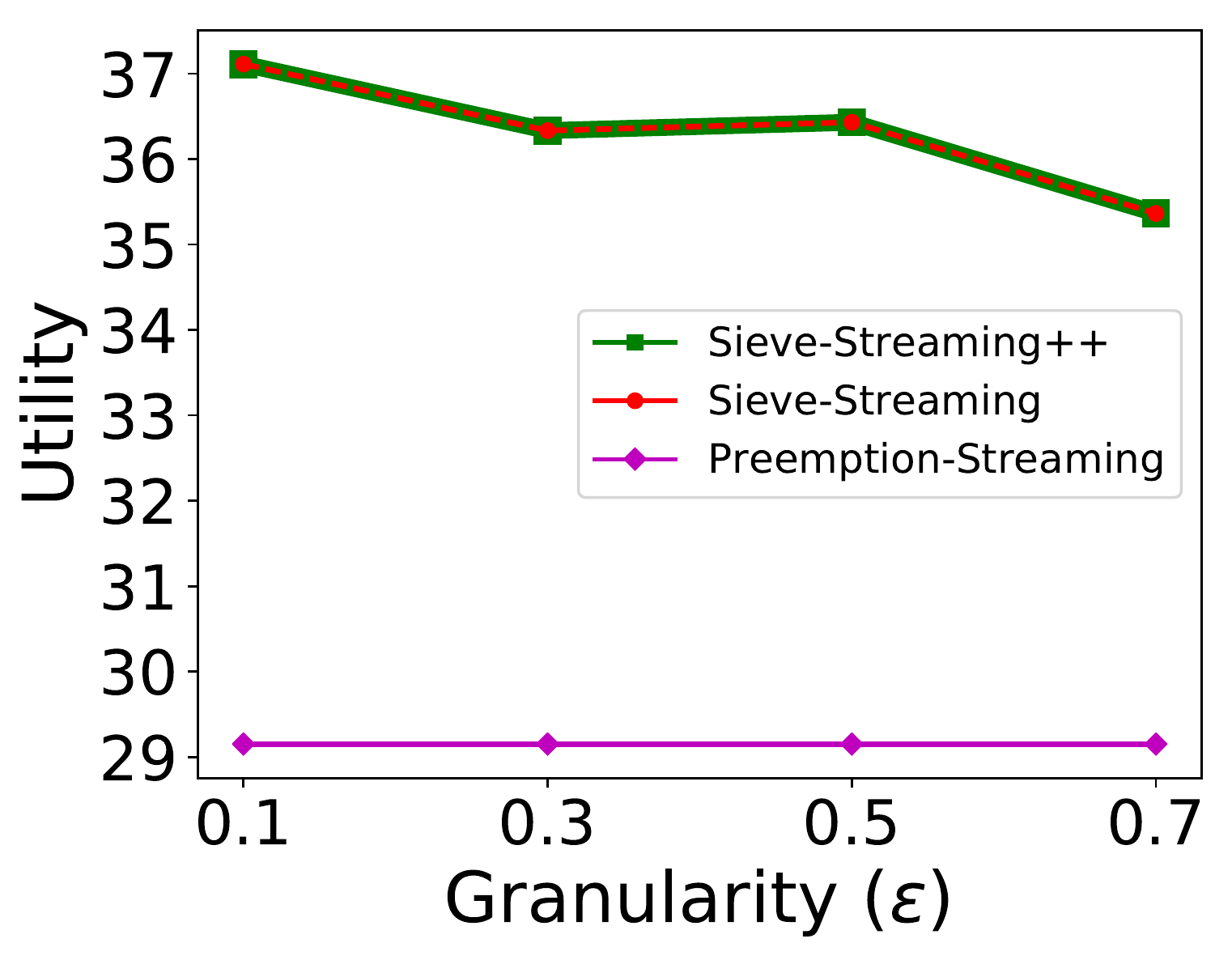}}
\subfloat[]{\includegraphics[height=31mm]{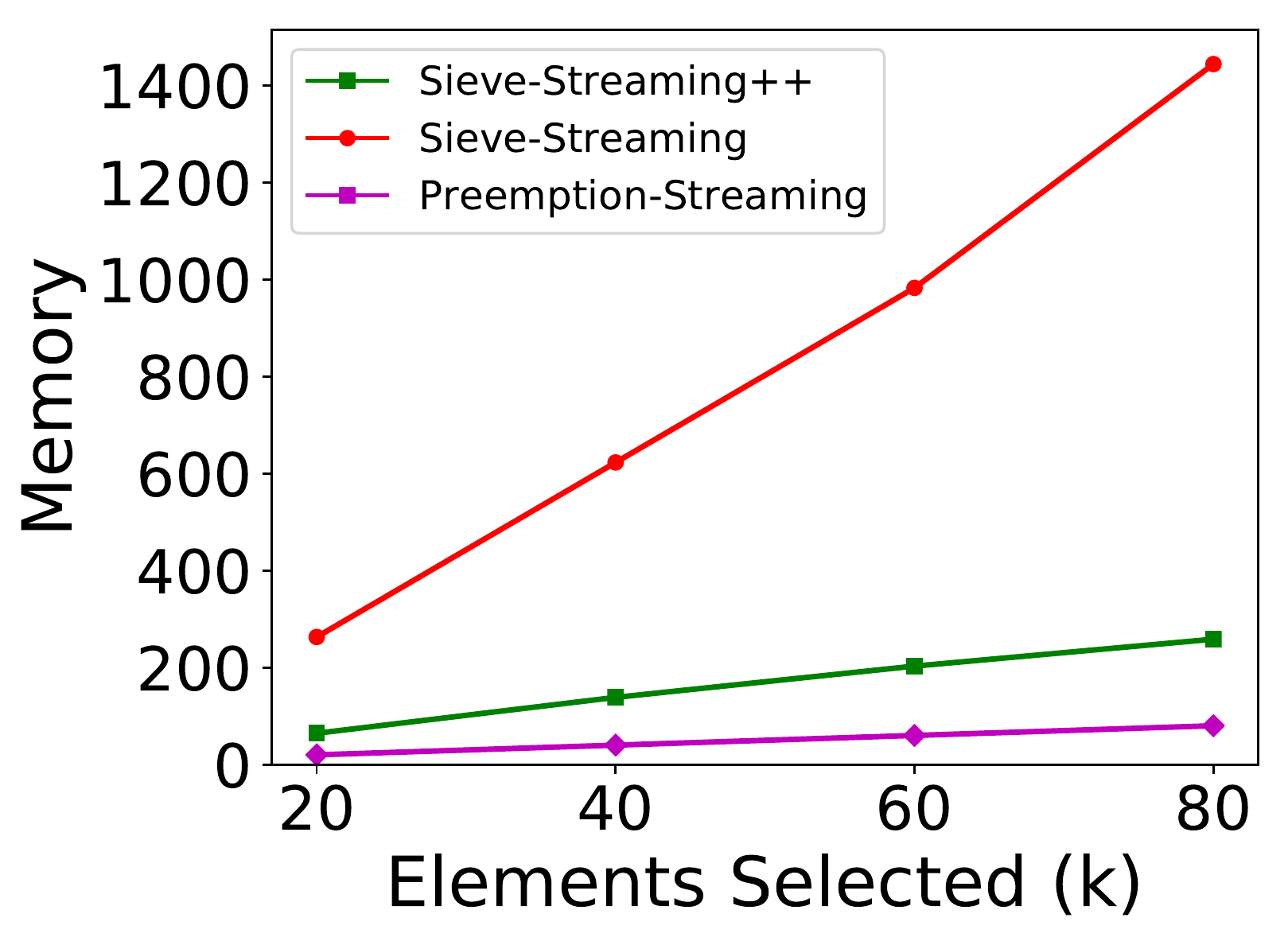}}
\subfloat[]{\includegraphics[height=31mm]{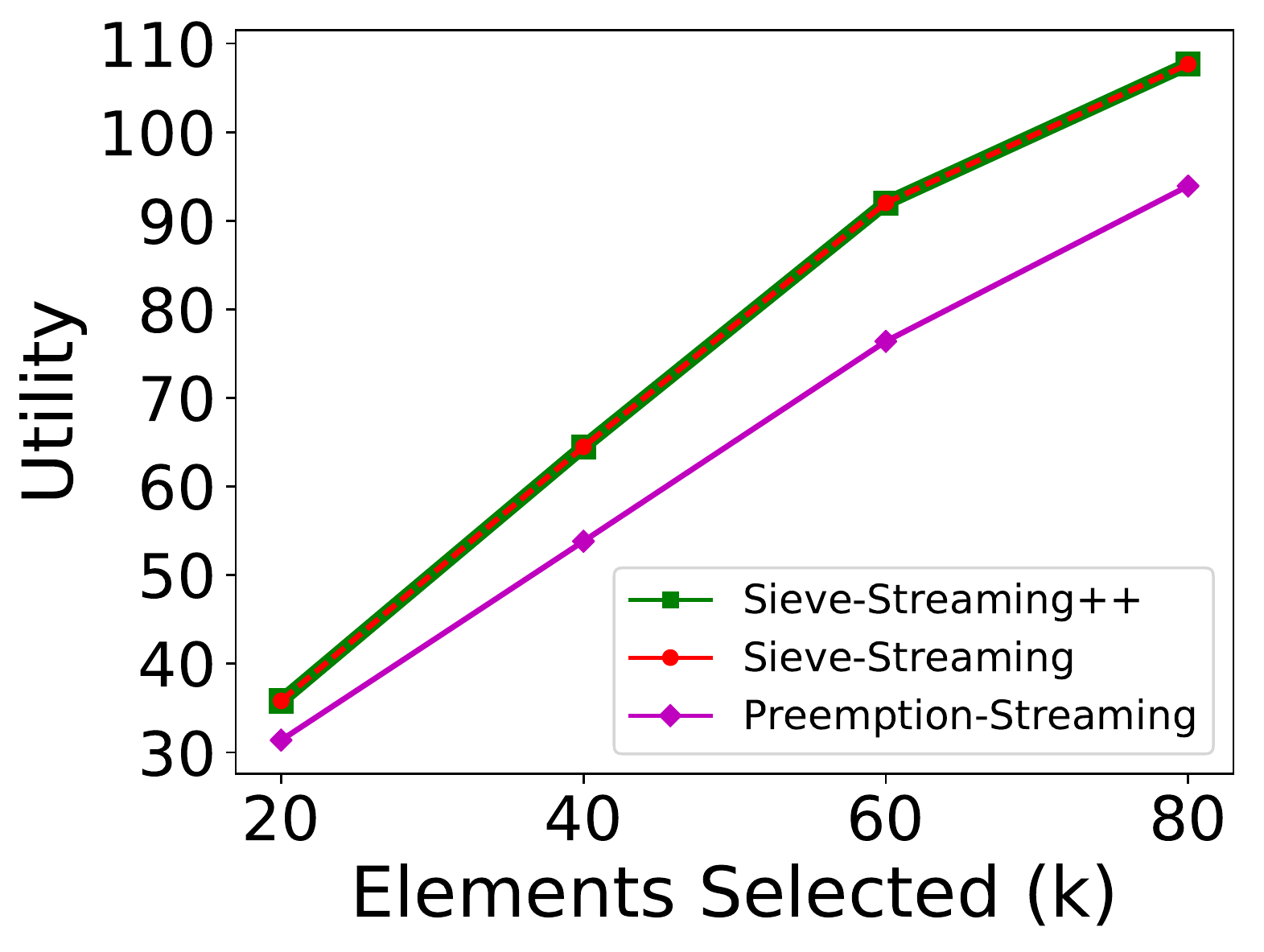}}
\caption{Graphs showing how the memory and utility of various single-source streaming algorithms vary with the cardinality $k$ and the granularity parameter $\epsilon$. Note that the utility of \AlgSieveStreamingPlus and \AlgSieveStreaming are exactly overlapping. In (a) and (b) we use k = 20, while in (c) and (d) we use $\epsilon = 0.3$.}
\label{sieveGraphs}
\end{figure}

\subsection{Single-Source Experiments} \label{singleSourceExperiments}

In this section, we want to emphasize the power of \AlgSieveStreamingPlus in the single-source streaming scenario. As discussed earlier, the two existing standard approaches for monotone $k$-cardinality submodular streaming are \AlgSieveStreaming and \AlgStreamingPreemption.

As mentioned in \cref{sec:streamplus}, \AlgSieveStreamingPlus theoretically has the best properties of both of these existing baselines, with optimal memory complexity and the optimal approximation guarantee. Figure \ref{sieveGraphs} shows the performance of these three algorithms on the YouTube video summarization task and confirms that this holds in practice as well.

For the purposes of this test, we simply combined the different video feeds into one single stream. We see that the memory required by \AlgSieveStreamingPlus is much smaller than the memory required by \AlgSieveStreaming, but it still achieves the exact same utility. Furthermore, the memory requirement of \AlgSieveStreamingPlus is within a constant factor of \textnormal{\textsc{Preemption-Streaming}}, while its utility is much better. 
The Twitter experiment gives similar results so those graphs are deferred to \cref{more_graphs}.

\subsection{Multi-Source Experiments} \label{multiSourceExperiments}

\begin{figure}[tb!] 
	\centering     %%% not \center
\subfloat[]{\includegraphics[height=30.5mm]{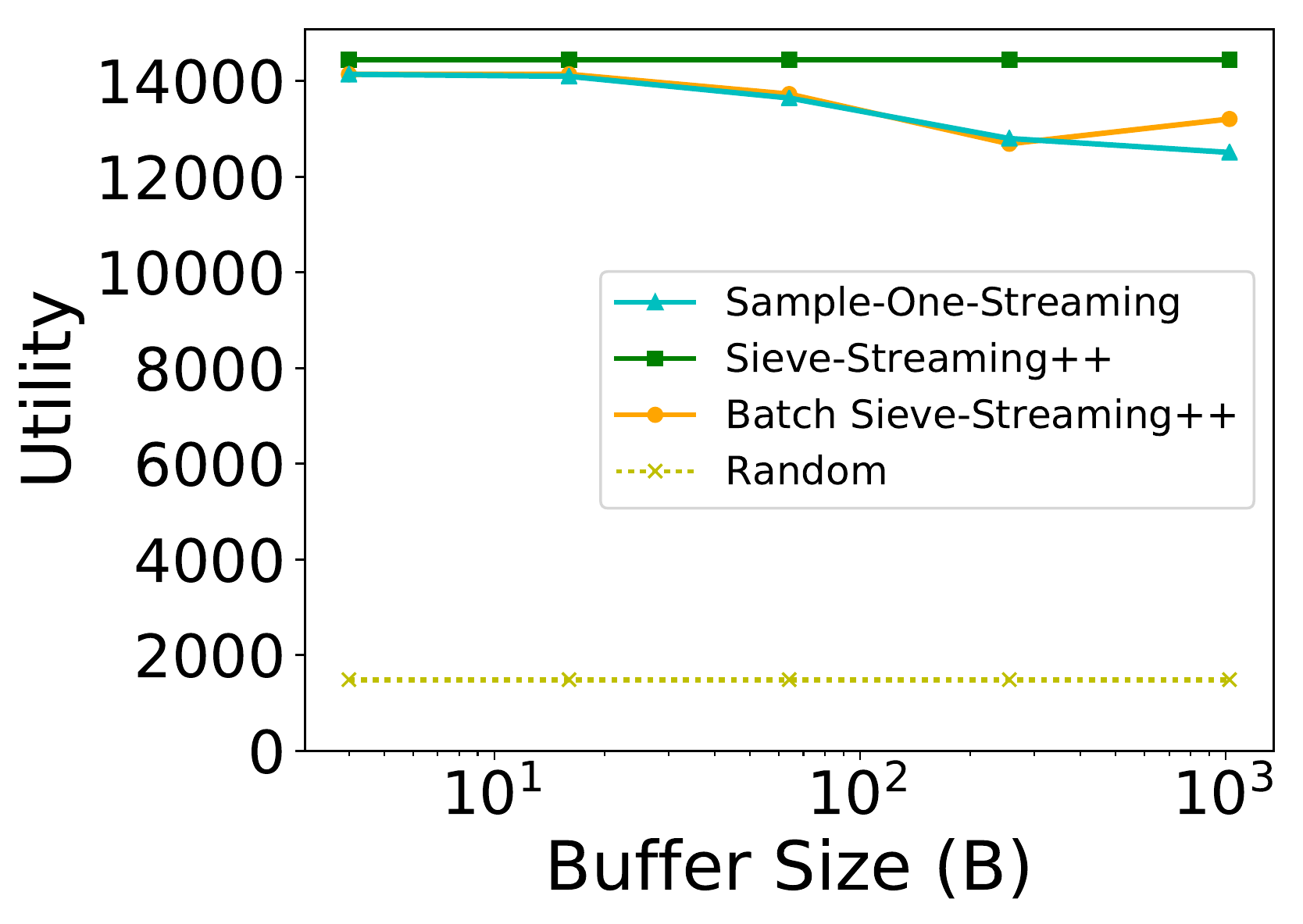}\label{twitterUtil}}
\subfloat[]{\includegraphics[height=30.5mm]{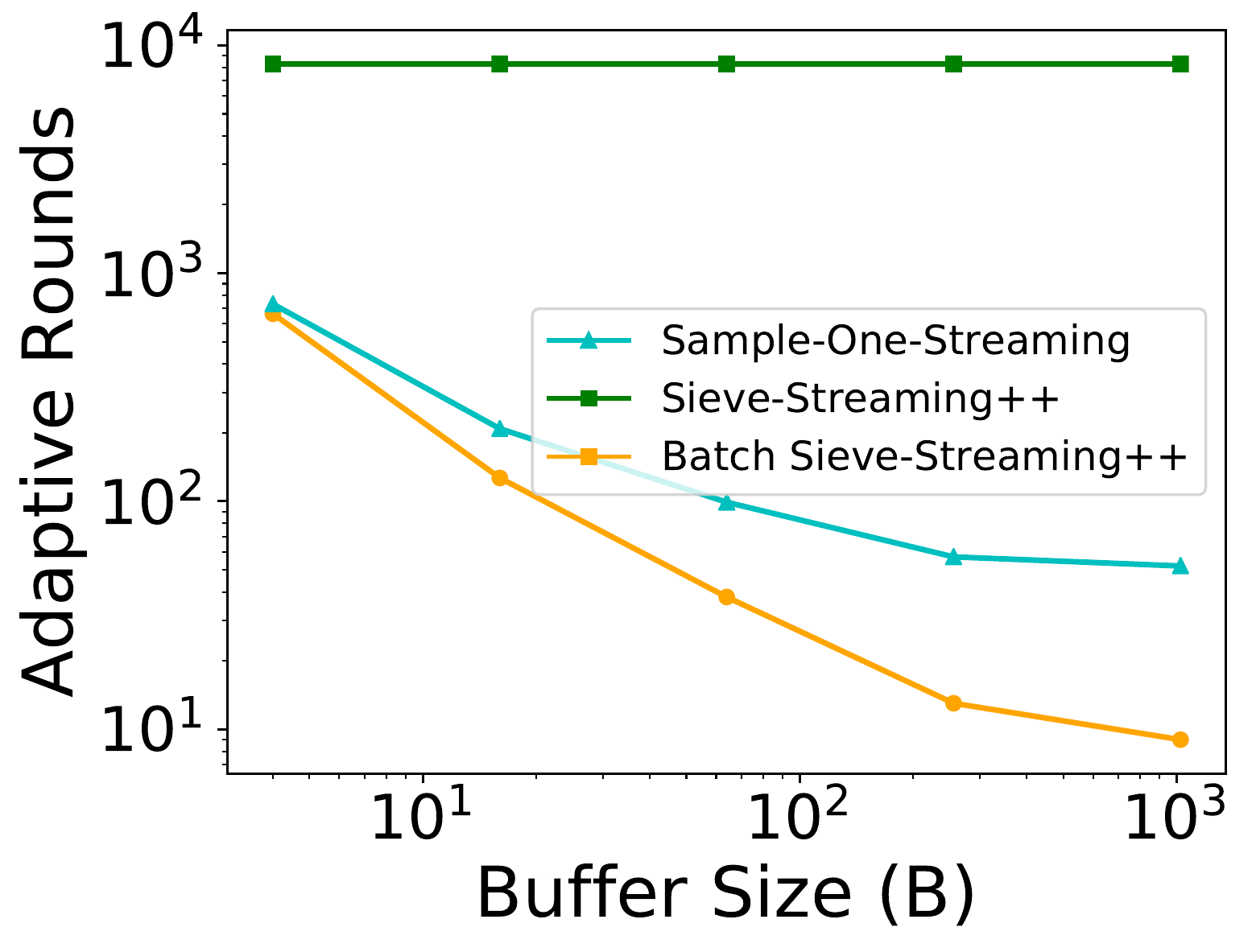}\label{twitterAdapt}}
\subfloat[]{\includegraphics[height=30.5mm]{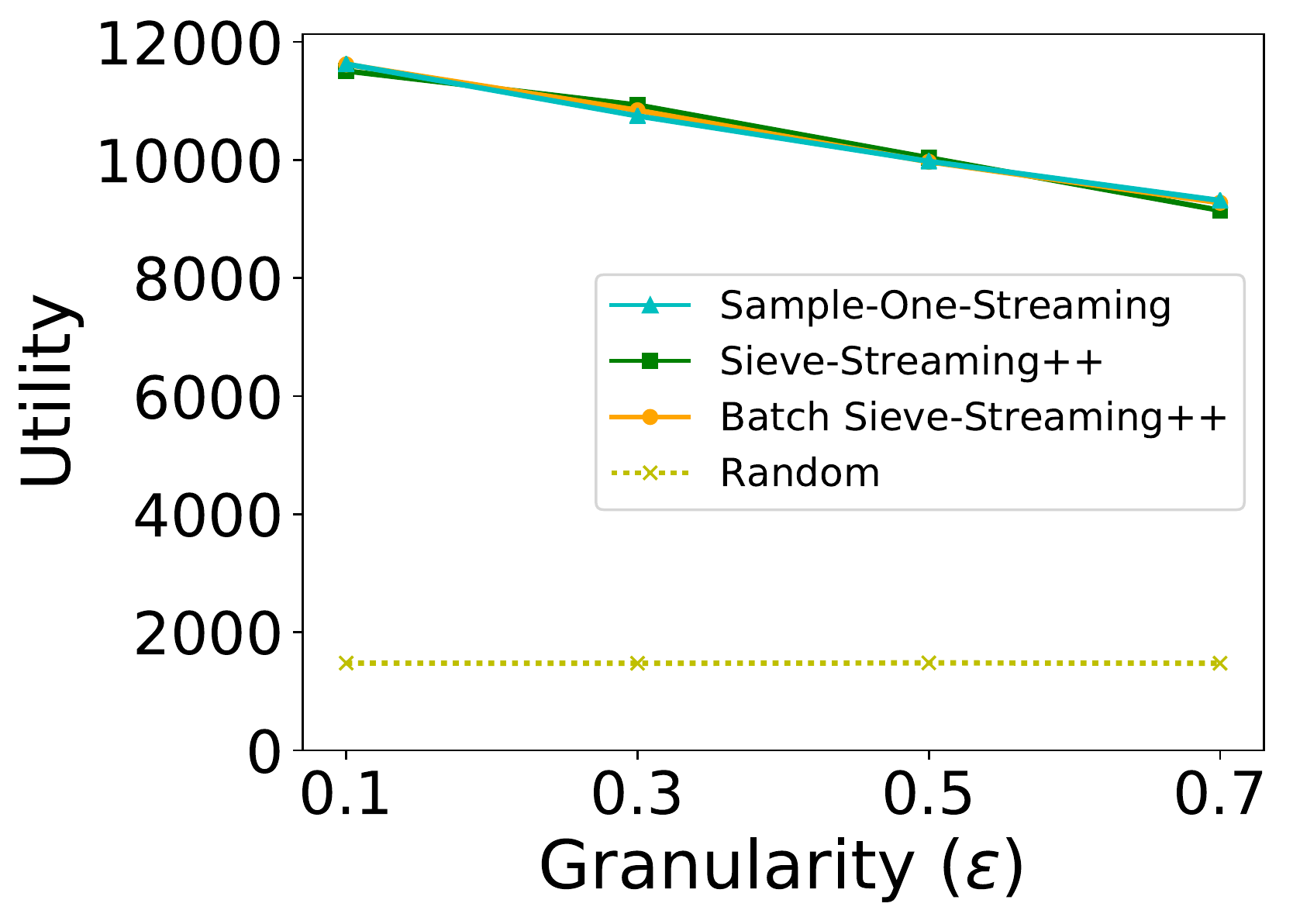}\label{varyEps1}}
\subfloat[]{\includegraphics[height=30.5mm]{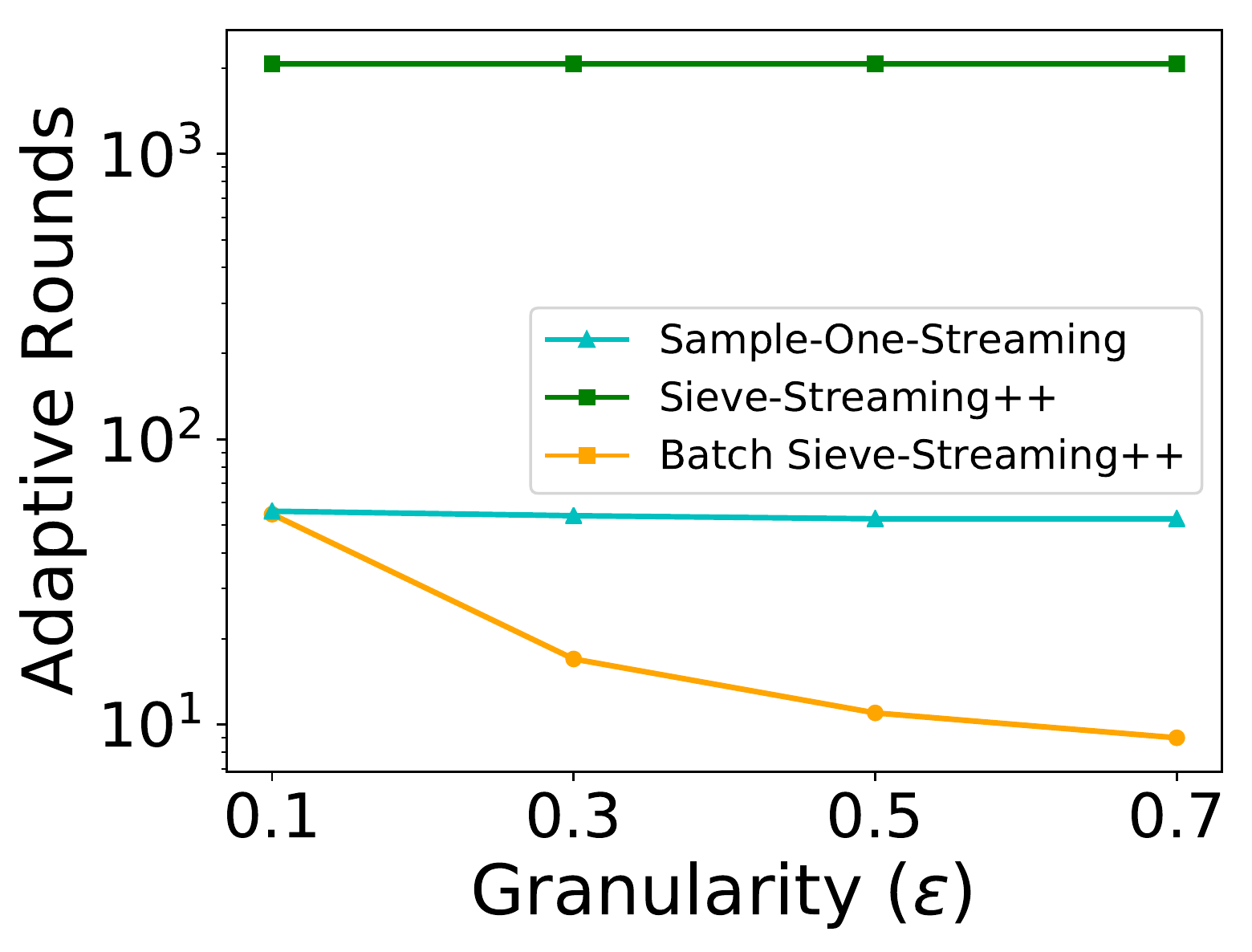}\label{varyEps2}}	
\caption{Graphs showing how the utility and adaptivity of various multi-source streaming algorithms vary with the buffer size $\memory$ and the granularity $\epsilon$. Unless they are being varied on the x-axis, we set $\epsilon = 0.7$, $\memory = 100$, and $k = 50$.}
\label{hybridGraphs}
\end{figure}

Once we move into the multi-source setting, the communication cost of algorithms becomes a key concern also. 
In this section, we compare the performance of algorithms in terms of utility and adaptivity where their communication cost is optimal.

Our first baseline is a trivial extension of \AlgSieveStreamingPlus. The multi-source extension for this algorithm essentially functions by locally computing the marginal gain of each incoming element, and only communicating it to the central machine if the marginal gain is above the desired threshold. However, as mentioned at the beginning \cref{batch}, this algorithm requires $\Omega(n)$ adaptive rounds. Our second baseline is \AlgOne, which was described in \cref{sec:single-alg}.

Figures \ref{twitterUtil} and \ref{twitterAdapt} show the effect of the buffer size $\memory$ on the performance of these algorithms for the Twitter task. The main observation is that \AlgHybrid can achieve roughly the same utility as the two baselines with many fewer adaptive rounds. Note that the number of adaptive rounds is shown in log scale.

Figures \ref{varyEps1} and \ref{varyEps2} show how these numbers vary with $\epsilon$. Again, the utilities of the three baselines are similar. We also see that increasing $\epsilon$ results in a large drop in the number of adaptive rounds for \AlgHybrid, but not for \AlgOne. \cref{more_graphs} gives some additional graphs, as well as the results for the YouTube dataset.

\subsection{Trade-off Between Communication and Adaptivity}

In the multi-source setting, there is a natural exchange between communication cost and adaptivity. 
Intuitively, the idea is that if we sample items more aggressively (which translates into higher communication cost), a set $S$ of $k$ items is generally picked faster, thus it reduces the adaptivity.
In the real world, the preference for one or the other can depend on a wide variety of factors ranging from resource constraints to the requirements of the particular problem.

In \AlgSampling,  we ensure the optimal communication performance
by sampling $t_i = \ceil{(1 + \epsilon)^{i+1} - (1+\epsilon)^{i}}$ items in each step of the for loop. 
Instead, to reduce the adaptivity by a factor of $\log(k)$, we could sample all the required $k$ items in a single step. 
Thus, in one adaptive round we mimic the two for loops of \AlgSampling. 
Doing this in each call to \cref{alg:Sampling} would reduce the expected adaptive complexity of \AlgSampling to the optimal $\log(\memory)$, but dramatically increase the communication cost to $O(k \log \memory)$.

In order to trade off between communication and adaptivity, we can instead sample $t^R_i = \ceil{(1 + \epsilon)^{i+R} - (1+\epsilon)^{i}}$ elements to perform $R$ consecutive adaptive rounds in only one round.
However, to maintain the same chance of a successful sampling, we still need to check the marginal gain. Finally, we pick a batch of the largest size $t^{j}_i$ such that the average marginal gain of the first $t^{j-1}_i$ items is above the desired threshold. Then we just add just this subset to $S_\tau$, meaning we have wasted $\ceil{(1 + \epsilon)^{i+R} - (1 + \epsilon)^{i+j}}$ communication.

Scatter plots of Figure~\ref{tradeoff} shows  how the number of adaptive rounds varies with the communication cost. 
Each individual dot represents a single run of the algorithm on a different subset of the data. The different colors cluster the dots into groups based on the value of $R$ that we used in that run.
Note that the parameter $R$ controls the communication cost.

The plot on the left comes from the Twitter experiment, while the plot on the right comes from the YouTube experiment. Although the shapes of the clusters are different in the two experiments,  we see that increasing $R$ increases the communication cost, but also decreases the number of adaptive rounds, as expected.

\begin{figure}[htb!] 
	\centering   
{\includegraphics[height=50mm]{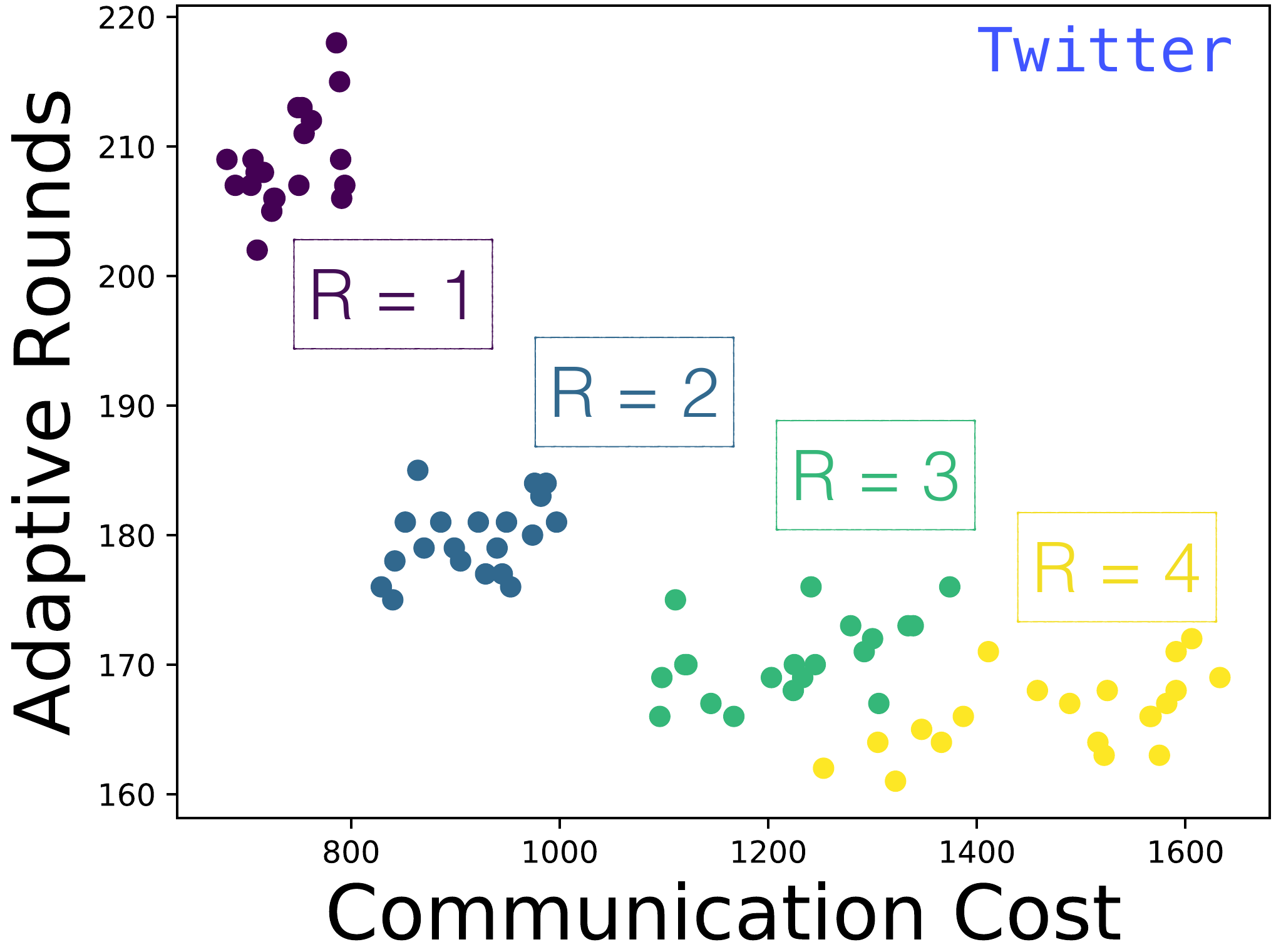}} 
\qquad
{\includegraphics[height=50mm]{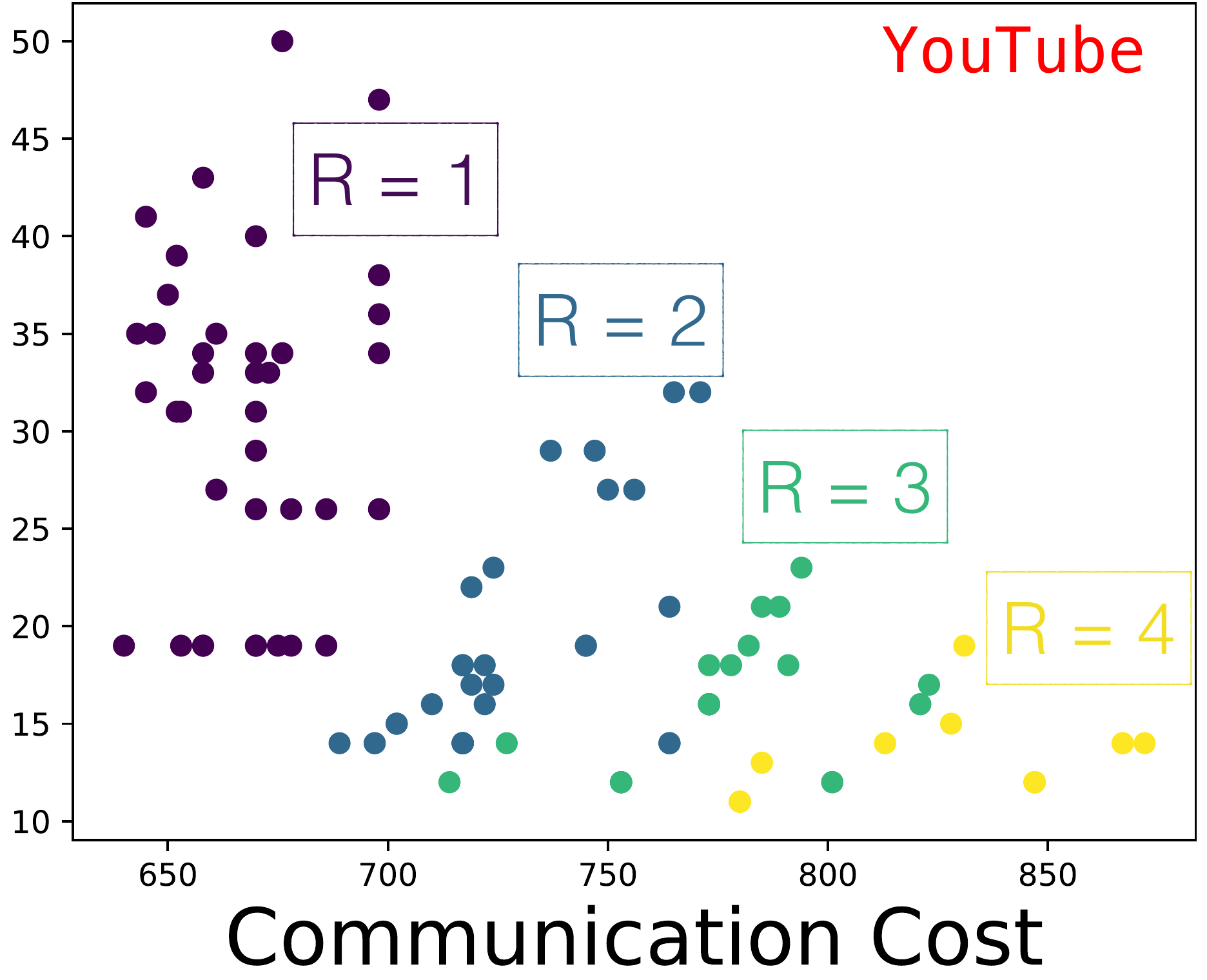}}
\caption{Scatter plots showing how we can lower the number of adaptive rounds by increasing communication. Each dot is the result of a single run of the algorithm and the colored clusters represent a particular setting for $R$. 
}
\label{tradeoff}
\end{figure}
\section{Implications of \AlgSieveStreamingPlus on Other Problems} \label{sec:discussion}

Recently, there has been several successful instances of using the sieving idea proposed by \citet{badanidiyuru2014streaming} for designing streaming algorithms for a wide range of submodular maximization problems.
In \cref{sec:streamplus}, we showed \AlgSieveStreamingPlus (see \cref{alg:Sieve-Stream++} and \cref{thm:sieve-stream++}) reduces the memory complexity of streaming submodular maximization to $O(k)$.
In this final section, we discuss how the approach of \AlgSieveStreamingPlus significantly improves the memory complexity for several previously studied important problems.

\paragraph{Random Order Streams}
\citet{norouzifard2018beyond} studied streaming submodular maximization under the assumption that elements of a stream arrive in random order.
They introduced a streaming algorithm called {{\textsc{SALSA}}\xspace} with an approximation guarantee better than $\nicefrac{1}{2}$.
This algorithm uses $O(k \log(k) )$ memory.
In a very straightforward way, similarly to the idea of \AlgSieveStreaming for lower bounding the optimum value,  we are able to improve the memory complexity of this algorithm to $O(k )$.
Furthermore,  \cite{norouzifard2018beyond} introduced a $p$-pass algorithm ($p\geq 2$) for submodular maximization subject to a cardinality constraint $k$.
We can also reduce the memory of this $p$-pass algorithm by a factor $\log (k)$.

\paragraph{Deletion-Robust}

\citet{mirzasoleiman2017deletion} have introduced a streaming algorithm for the deletion-robust submodular maximization. 
Their algorithm provides a summary $S$ of size $O(k d\log(k) / \epsilon)$ where it is robust to deletion of any set $D$ of at most $d$ items.
 \citet{kazemi2018scalable} were able to reduce the size of the deletion-robust summary to $O(k \log (k) / \epsilon + d \log^2(k) / \epsilon^3)$.
The idea of \AlgSieveStreamingPlus for estimating the value of \Opt reduces the memory complexities of these two algorithms to  $O(k d / \epsilon)$ and $O(k  / \epsilon + d \log^2(k) / \epsilon^3)$, respectively.
It is also possible to reduce the memory complexity of {{\textsc{STAR-T-GREEDY}}\xspace} \citep{mitrovic2017streaming} by at least a factor of $\log(k)$.

\paragraph{Two-Stage}
\citet{mitrovic2018data} introduced a streaming algorithm called {{\textsc{Replacement-Streaming}}\xspace}  for the two-stage submodular maximization problem which is originally proposed by \citep{balkanski2016tlearning, stan2017probabilistic}.
The memory complexity of {{\textsc{Replacement-Streaming}}\xspace} is $O(\frac{\ell \log \ell }{\epsilon})$, where $\ell$ is the size of the produced summary.
Again, by applying the idea of \AlgSieveStreamingPlus for guessing the value of \Opt and analysis similar to the proof of \cref{thm:sieve-stream++}, we can reduce the memory complexity of the streaming two-stage submodular maximization to $O(\frac{\ell}{\epsilon})$. 

\paragraph{Streaming Weak Submodularity}
Weak submodular functions generalize the diminishing returns property.
\begin{definition}[Weakly Submodular \citep{das2011submodular}]
	A monotone and non-negative set function $f: 2^\ground \rightarrow \bR_{\geq 0}$ is  $\gamma$--weakly submodular if for each sets $A,B \subset \ground$, we have
	\begin{align*}
		\gamma \leq  \frac{\sum_{e \in A \setminus B} f(\{e\} \mid B)}{f(A \mid B)},
	\end{align*}
	where the ratio is considered to be equal to $1$ when its numerator and denominator are $0$.
\end{definition}
It is straightforward to show that $f$ is submodular if and only if $\gamma = 1$.
In the streaming context subject to a cardinality constraint $k$, \citet{elenberg2017streaming} designed an algorithm with a constant factor approximation for $\gamma$--weakly submodular functions.
The memory complexity of their algorithm is $O(\frac{k \log k}{\epsilon})$.
By adopting the idea of \AlgSieveStreamingPlus, we could reduce the memory complexity of their algorithm to $O(\frac{k}{\epsilon})$.

\cref{tbl:other-summary} provides a summary of algorithms that we could significantly improve their memory complexity, while their approximation factors are maintained.

\begin{table}
	\caption{Improved results for several other recently studied submodular maximization problems. The main idea of \AlgSieveStreamingPlus enables us to significantly reduce the memory complexity of these problems.} \label{tbl:other-summary}
	{\scriptsize
	\begin{center}
		\begin{tabular}{llccl}
			\toprule
		\textbf{Problem} &	\textbf{Algorithm} & \textbf{Old Memory} & \textbf{New Memory} & \textbf{Ref.} \\
			\midrule
			Streaming & {\textsc{Salsa}} & $O(k \log(k))$ & $O(k)$  & \citet{norouzifard2018beyond} \\
			Streaming & {\textsc{P-PASS}} & $O(\nicefrac{k \log (k)}{\epsilon})$ & $O(\nicefrac{k}{\epsilon})$  & \citet{norouzifard2018beyond} \\
				\midrule
				Weak submodular & {\textsc{Streak}}  & $O(\nicefrac{k \log (k)}{\epsilon})$ & $O(\nicefrac{k}{\epsilon})$ & \citet{elenberg2017streaming}\\
				\midrule 
				Deletion-Robust & {\textsc{ROBUST}} &  $O(\nicefrac{k d \log(k)} {\epsilon})$ & $O(\nicefrac{k d} {\epsilon})$ & \citet{mirzasoleiman2017deletion} \\
				Deletion-Robust & {\textsc{ROBUST-STREAM}} & $O(\nicefrac{k \log (k) }{\epsilon} + \nicefrac{d \log^2(k)}{\epsilon^3)}$ &  $O(\nicefrac{k }{\epsilon} + \nicefrac{d \log^2(k)}{\epsilon^3)}$ & \citet{kazemi2018scalable} \\
				\midrule
				Two-Stage& {\textsc{REPLACE-STREAM}} &  $O(\nicefrac{\ell \log (\ell) }{\epsilon})$ &  $O(\nicefrac{\ell}{\epsilon})$&  \citet{mitrovic2018data}\\
			\bottomrule
		\end{tabular}
	\end{center}
}
\end{table}

\section{Conclusion}
In this paper, we studied the problem of maximizing a non-negative submodular function over a multi-source stream of data subject to a cardinality constraint $k$.
We first proposed \AlgSieveStreamingPlus with the optimum approximation factor and memory complexity for a single stream of data.
Build upon this idea, we designed an algorithm for multi-source streaming setting with a $\nicefrac{1}{2}$ approximation factor, $O(k)$ memory complexity, a very low communication cost, and near-optimal adaptivity.
We evaluated the performance of our algorithms on two real-world data sets of multi-source tweet streams and video streams.
Furthermore, by using the main idea of \AlgSieveStreamingPlus, we  significantly improved the memory complexity of several important submodular maximization problems.

\paragraph{Acknowledgements.} The work of Amin Karbasi is supported by AFOSR Young Investigator Award (FA9550-18-1-0160) and  Grainger Award (PO 2000008083 2016012).
We would like to thank Ola Svensson for his comment on the first version of this manuscript.
\bibliographystyle{plainnat}
\bibliography{hybridStreaming}

\newpage
\appendix
\section{Proof of \cref{lemma:filtering-rounds}} \label{sec:lemma-proof}
\begin{proof}
Since we are only adding elements to $S$, using submodularity the marginal value of any element $x$ to $S$, i.e. $f(x \mid S)$,  is decreasing. 
Therefore, once an element is removed from the buffer $\buffer$, it never comes back. As a result, the set $\buffer$ is shrinking over time. When $\buffer$ becomes empty, the algorithm terminates. Therefore it suffices to show that in  every iteration of the while loop, a constant fraction of elements will be removed from $\buffer$ in expectation. The rest of the analysis follows by analyzing the expected size of $\buffer$ over time and applying Markov's inequality. 

We note that to avoid confusion, we call one round of the while loop in Lines~\ref{line:filter}--\ref{line:end-loop} of \AlgSampling an iteration. 
There are two other internal for loops at Lines~\ref{line:loop1:begin}--\ref{line:loop1:end} and Lines~\ref{line:loop2:begin}--\ref{line:loop2:end}. Later in the proof, we call each run of these for loops a step. There are $\ceil{\frac{1}{\epsilon}}$ steps in the first for loop and $O(\log(k))$ steps in the second.

If an iteration ends with growing $S$ into a size $k$ set, that is going to be the final iteration as the algorithm  \AlgSampling because the algorithm terminates once $k$ elements are selected. So we focus on the other case. An iteration breaks (finishes) either in the first for the loop at Lines~\ref{line:loop1:begin}--\ref{line:loop1:end} or in the second for loop of  Lines~\ref{line:loop2:begin}--\ref{line:loop2:end}.
We say an iteration fails if after termination less than $\epsilon/2$ fraction of elements in $\buffer$ is removed. 
For iteration ${\ell}$, let $\buffer^{\ell}$ be the set $\buffer$ at Line~\ref{line:filter} at the beginning of this iteration. So the first set $\buffer^1$ consists of all the input elements passed to \AlgSampling. 
So we can say that an iteration ${\ell}$ fails if $|\buffer^{{\ell}+1}|$ is greater than $(1-\epsilon/2) \cdot |\buffer^{\ell}|$. 

Failure of an iteration can happen in any of the $\ceil{\frac{1}{\epsilon}} + O(\log(k))$ steps of the two for loops. For each step $1 \leq z \leq \ceil{\frac{1}{\epsilon}} + O(\log(k))$, we denote the probability that the current iteration is terminated at step $z$ at a failed state with $P_z$. The probability that an iteration will not fail can then be written as 
\[
\prod_{z} (1-P_z).
\]
In the rest of the proof, we show that this quantity is at least a constant for any constant $\epsilon > 0$.

First, we show that 
at any of the  $\ceil{\frac{1}{\epsilon}}$ steps of the first for loop, the probability of failing is less than $\epsilon/2$. 
Let us consider step $1 \leq z \leq \ceil{\frac{1}{\epsilon}}$. 
We focus on the beginning of step $z$ and upper bound $P_z$ for any possible outcome of the previous steps $1, \cdots, z-1$. 
Let $S$ be the set of selected elements in all the first $z-1$ steps.
 If at least $\epsilon/2$ fraction of elements in $\buffer^{\ell}$ has a marginal value less than $\tau$ to $S$, we can say that the iteration will not fail for the rest of the steps for sure (with probability $1$). 
 We note that as $S$ grows the marginal values of elements to it will not increase, so at least $\epsilon/2$ fraction of elements will be filtered out independent of which step the process terminates.

So we focus on the case that less than $\epsilon/2$ fraction of elements 
in $\buffer^{\ell}$ have marginal value less than $\tau$ to $S$. 
Since, in the first loop, we pick one of them randomly and look at its marginal value as a test to whether terminate the iteration or not, the probability of termination at this step $z$ is not more than $\epsilon/2$ and therefore $P_z$ is also at most $\epsilon/2$.

In the second for loop, at Lines~\ref{line:loop2:begin}--\ref{line:loop2:end}, we have a logarithmic number of steps and we can upper bound the probability of terminating the iteration in a failed state at any of these steps in a similar way. The main difference is that instead of sampling one random element from $\buffer$, we pick $t$ random elements and look at their average marginal value together as a test to whether terminate the current iteration or not. 

We want to upper bound the probability of terminating the iteration in a step $z > \ceil{\frac{1}{\epsilon}}$ at a failed state. 
This will happen if at the step $z$ the \AlgSampling algorithm picks a random subset $T$ with
\begin{itemize}
\item  $\dfrac{f(T \mid S)}{|T|} \leq (1-\epsilon)\tau$, and
\item  also less than $\epsilon/2$ fraction of elements in $\buffer^{\ell}$ has a marginal value less than $\tau$ to $T \cup S$. 
\end{itemize}

We look at the process of sampling $T$ as a sequential process in which we pick $t$ random elements one by one. 
We can call each of these $t$ parts a small random experiment. 
We note that the first property above holds only if in at least $\epsilon t$ of these smaller random experiments the marginal value of the selected element to the current set $S$ is below $\tau$. 
We also assume that we add the selected elements to $S$ as we move on. 
We simulate this random process with a binomial process of tossing $t$ independent coins.
 If the marginal value of the $i$-th sampled element to $S$ is at least $\tau$, we say that the associated coin toss is a head. 
 Otherwise, we call it a tail. The probability of a tail depends on the fraction of elements in $\buffer^{\ell}$ with marginal value less than $\tau$ to $S$. 
 If this fraction at any point is at least $\epsilon/2$, we know that the second necessary property for a failed iteration does not hold anymore and will not hold for the rest of the steps. 
 Therefore the failure happens only if we face at least  $\epsilon t$ tails each with probability at most $\epsilon/2$. The rest of the analysis is applying simple concentration bounds for different values of $t$.

So we have a binomial distribution with $t$ trials each with head probability at least $1-\epsilon/2$, and we want to upper bound the probability that we get at least $\epsilon t$ tails. The expected number of tails is not more than $\epsilon t /2$ so using Markov's inequality, the probability of seeing at least $\epsilon t$  tails is at most $0.5$. Furthermore, for larger values of $t$ we can have much better concentration bounds. 

Using Chernoff type bounds in Lemma~\ref{lem:chernoff}, we know the probability of observing at least  $\epsilon t$ tails is not more than: 

\[
P_z \leq  \Prob{}{ \textrm{tails} - \epsilon t/2 \ge \epsilon t /2} \le e^{-\epsilon t / 10}.
\]

As we proceed in steps, the number of samples $t$ grows geometrically.
Consequently, the failure probability declines exponentially (double exponential in the limit). 

So the number of steps it takes to reach the failure probability declining phase is a function of $\epsilon$ and therefore it is a constant number.
We conclude that for any constant $\epsilon > 0$, the probability of not failing in an iteration, i.e., $\prod_z (1-P_z)$, is lower bounded by a constant  $\zeta_{\epsilon} > 0$.
 Since any iteration will terminate eventually, we can say that for any iteration with constant probability an $\epsilon/2$ fraction of elements will be filtered out of $\buffer$. 
So the expected size of $\buffer$ after $X$ iterations will be at most $2^{-\Omega(X)} n$ where $n$ is the number of input elements at the beginning of \AlgSampling. 
So the probability of having more than $C \log(|\buffer|)$ iterations decreases exponentially with $C$ for any coefficient $C$ using Markov's inequality which means the expected number of iterations is $O(\log(|\buffer|))$.
\end{proof}

\begin{lemma}[Chernoff bounds, \cite{bansal2006santa}]
\label{lem:chernoff}
Suppose $X_1,\dots,X_n$ are binary random variables such that
  $\Prob{}{X_{i}=1} = p_i$. Let $\mu = \sum_{i=1}^n p_i$ and
$X = \sum_{i=1}^n X_i$. Then for any $a > 0$, we have
\[
  \Prob{}{X - \mu \ge a} \le e^{-a \min\parens*{\frac{1}{5}, \frac{a}{4\mu}}}.
\]
Moreover, for any $a > 0$, we have
\[
  \Prob{}{X - \mu \le - a} \le e^{-\frac{a^2}{2\mu}}.
\]
\end{lemma}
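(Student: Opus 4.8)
The plan is to prove both inequalities by the standard exponential-moment (Chernoff) argument, treating $X_1,\dots,X_n$ as mutually independent (as in the cited source), and then to reduce each optimized bound to an elementary one-variable inequality.

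For the upper tail, I would start from Markov's inequality applied to $e^{tX}$ for a parameter $t>0$:
\[
\Prob{}{X - \mu \ge a} = \Prob{}{e^{tX} \ge e^{t(\mu+a)}} \le e^{-t(\mu+a)}\,\mathbb{E}\bracks*{e^{tX}}.
\]
By independence and the inequality $1+x\le e^{x}$, we get $\mathbb{E}[e^{tX}] = \prod_i\parens*{1+p_i(e^{t}-1)} \le \prod_i e^{p_i(e^{t}-1)} = e^{\mu(e^{t}-1)}$, so $\Prob{}{X-\mu\ge a}\le \exp\parens*{\mu(e^{t}-1) - t(\mu+a)}$. Choosing $t = \ln(1+a/\mu)$ gives the classical bound $\exp\parens*{-\mu\,\psi(a/\mu)}$ with $\psi(\delta) = (1+\delta)\ln(1+\delta)-\delta$. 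It then remains to verify the scalar inequality $\psi(\delta)\ge \delta\min\set*{\tfrac15,\tfrac{\delta}{4}}$ for all $\delta\ge 0$: for $\delta\le\tfrac45$ I would use the standard estimate $\psi(\delta)\ge \delta^2/(2+2\delta/3)\ge \delta^2/4$, and for $\delta\ge\tfrac45$ I would check $\psi(\delta)\ge\delta/5$ by verifying it at $\delta=\tfrac45$ and noting that $\psi'(\delta)=\ln(1+\delta)$ already exceeds $\tfrac15$ there and is increasing. Substituting $\delta=a/\mu$ yields $\Prob{}{X-\mu\ge a}\le e^{-a\min(1/5,\,a/(4\mu))}$.

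For the lower tail I would symmetrically apply Markov to $e^{-tX}$ with $t>0$:
\[
\Prob{}{X-\mu\le -a} \le e^{t(\mu-a)}\,\mathbb{E}\bracks*{e^{-tX}} \le \exp\parens*{\mu(e^{-t}-1)+t(\mu-a)},
\]
again using independence and $1+x\le e^{x}$. For $a<\mu$ I take $t=-\ln(1-a/\mu)$, which turns the exponent into $-\mu\, g(a/\mu)$ with $g(\delta)=\delta+(1-\delta)\ln(1-\delta)$, and the claimed bound $e^{-a^2/(2\mu)}$ follows from $g(\delta)\ge \delta^2/2$ on $[0,1)$, which I would prove from $g(0)=0$ together with $g'(\delta)=-\ln(1-\delta)-\delta\ge 0$. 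The boundary cases are immediate: if $a>\mu$ then $X\ge 0$ forces $\Prob{}{X\le \mu-a}=0$, and if $a=\mu$ then $\Prob{}{X=0}\le e^{-\mu}\le e^{-\mu/2}$.

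The routine parts here are the Chernoff setup and the choice of the optimal $t$; the only step requiring genuine care is pinning down the specific constants $1/5$ and $1/4$ in the upper-tail bound, i.e.\ establishing the two-regime inequality $\psi(\delta)\ge\delta\min\set*{1/5,\delta/4}$ and in particular handling the crossover at $\delta=4/5$ cleanly. Everything else is a standard textbook computation.
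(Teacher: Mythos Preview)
The paper does not prove this lemma at all; it is quoted verbatim from \cite{bansal2006santa} and used as a black box in the proof of \cref{lemma:filtering-rounds}. So there is no ``paper's own proof'' to compare against.

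Your proposal is a correct and complete proof, and in fact supplies something the paper omits. A couple of brief remarks. First, you are right to flag the independence assumption explicitly: the lemma as stated in the paper does not say the $X_i$ are independent, but the moment-generating-function factorization you use needs it, and this is indeed how the cited source intends it. Second, your handling of the constants is clean: the estimate $\psi(\delta)\ge\delta^2/(2+2\delta/3)$ is a standard sharpening of the Chernoff exponent, and the monotonicity argument at the crossover $\delta=4/5$ goes through exactly as you describe (numerically, $\psi(4/5)\approx 0.258>0.16$ and $\psi'(4/5)=\ln(9/5)\approx 0.588>1/5$). For the lower tail, your derivative computation is for $h(\delta)=g(\delta)-\delta^2/2$, so $h'(\delta)=-\ln(1-\delta)-\delta\ge 0$ follows from $e^{-\delta}\ge 1-\delta$; this is fine, just make sure the write-up makes clear that the ``$g'$'' you display is really $h'$.
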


\section{Twitter Dataset Details} \label{sec:twitter-function}

\subsection{Intuition}
To clean the data, we removed punctuation and common English words (known as stop words, thus leaving each individual tweet as a list of keywords with a particular timestamp. To give additional value to more popular posts, we also saved the number of retweets each post received. 

Therefore, any individual tweet $t$ consists of a set of keywords $K_t$ and a value $v_t$ that is the number of retweets divided by the number of words in the post. 

A set of tweets $T$ can be thought of as a list of $(keyword,score)$ pairs. The keywords $K_T$ in a set $T$ is simply a union of the keywords of the tweets in T:

\[
K_T = \bigcup_{t \in T} K_{t}
\]

The score $s_k$ of each keyword $k \in K_T$ is simply the sum of the values of posts containing that keyword. That is, if $T_k \subseteq T$ is the subset of tweets in $T$ containing the keyword $k$, then:
\[
s_k = \sum_{t \in T_k} v_t
\]
Therefore, we define our submodular function $f$ as follows:
\[
f(T) = \sum_{k \in K_T} \sqrt{s_k}
\]

Intuitively, we sum over all the keyword scores because we want our set of tweets to cover as many high-value keywords as possible. However, we also use the square root to introduce a notion of diminishing returns because once a keyword already has a high score, we would prefer to diversify instead of further picking similar tweets. 

\subsection{General Formalization}

In this section, we first rigorously define the function used for Twitter stream summarization in \cref{sec:twitter}. We then prove this function is non-negative and monotone submodular.

\paragraph{Function Definition}
Consider a function $f$ defined over a ground set $\ground$ of items.
Each item $e \in \ground$ consists of a positive value $\text{val}_e$ and a set of $\ell_e$ keywords $W_e = \{ w_{e,1}, \cdots, w_{e, \ell_e}\}$ from a general set of keywords $\cW$.
The score of a word $w \in W_e $ for an item $e$ is defined by $\text{score}(w,e) = \text{val}_e$.
If $w \notin W_e $, we  define $\text{score}(w,e) = 0$.
For a set $S \subseteq V$ the function $f$ is defined as follows:
\begin{align} \label{eq:function-twitter}
	f(S) = \sum_{w \in \cW} \sqrt{\sum_{e \in S} \text{score}(w,e)}.
\end{align}

\begin{lemma}
	The function $f$ defined in Eq.~\eqref{eq:function-twitter} is non-negative and monotone submodular.
\end{lemma}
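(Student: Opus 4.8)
The function $f(S) = \sum_{w \in \cW} \sqrt{\sum_{e \in S} \text{score}(w,e)}$ is a nonnegative sum over keywords $w \in \cW$, so it suffices to establish each of the three properties (nonnegativity, monotonicity, submodularity) for a single summand $g_w(S) = \sqrt{\sum_{e \in S} \text{score}(w,e)}$ and then invoke the fact that these properties are preserved under nonnegative (finite) sums. Write $c_w(e) = \text{score}(w,e) \geq 0$, so that $g_w(S) = \phi\!\left(\sum_{e\in S} c_w(e)\right)$ where $\phi(x) = \sqrt{x}$ is defined on $\bR_{\geq 0}$. Thus the problem reduces to the classical observation that a concave, nondecreasing function composed with a nonnegative modular (additive) set function is monotone submodular.

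The plan is to proceed in three short steps. First, \textbf{nonnegativity}: since every $c_w(e) \geq 0$, the argument of each square root is nonnegative, hence $g_w(S) \geq 0$ and $f(S) \geq 0$. Second, \textbf{monotonicity}: for $A \subseteq B$ the modular inner sum satisfies $\sum_{e \in A} c_w(e) \leq \sum_{e \in B} c_w(e)$ because we are adding only nonnegative terms; since $\phi = \sqrt{\cdot}$ is nondecreasing, $g_w(A) \leq g_w(B)$, and summing over $w$ gives $f(A) \leq f(B)$. Third, \textbf{submodularity}: fix $A \subseteq B \subset \ground$ and $e \in \ground \setminus B$, and set $a = \sum_{x \in A} c_w(x)$, $b = \sum_{x \in B} c_w(x)$, $\delta = c_w(e) \geq 0$; then $a \leq b$, and the marginal gains are $g_w(\{e\}\mid A) = \phi(a+\delta) - \phi(a)$ and $g_w(\{e\}\mid B) = \phi(b+\delta) - \phi(b)$. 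The desired inequality $\phi(a+\delta)-\phi(a) \geq \phi(b+\delta)-\phi(b)$ is exactly the statement that $\phi$ has nonincreasing increments, i.e.\ $\phi$ is concave (a differentiable concave $\phi$ has $\phi'$ nonincreasing, so $\int_a^{a+\delta}\phi' \geq \int_b^{b+\delta}\phi'$ when $a \leq b$). For $\phi(x) = \sqrt{x}$ this can be verified directly, e.g.\ via $\phi'(x) = \tfrac{1}{2\sqrt{x}}$ being nonincreasing on $(0,\infty)$, with the boundary case $a = 0$ handled by continuity. Summing the per-keyword submodularity inequalities over $w \in \cW$ yields submodularity of $f$.

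I expect no real obstacle here; the only mild subtlety is the degenerate case where an inner sum is zero (so we are at the non-differentiable point $x=0$ of $\sqrt{x}$), which is dispatched by noting $\sqrt{\cdot}$ is still continuous and concave on all of $[0,\infty)$ and the concavity-of-increments argument only uses concavity, not differentiability — alternatively one can argue the inequality $\sqrt{a+\delta}-\sqrt{a}\geq\sqrt{b+\delta}-\sqrt{b}$ for $0\le a\le b$ directly by rationalizing each difference as $\delta/(\sqrt{a+\delta}+\sqrt{a})$ and observing the denominator on the left is no larger. A second point worth stating explicitly is the closure lemma: if $g_1,\dots,g_r$ are each nonnegative/monotone/submodular then so is $\sum_i g_i$, which is immediate from the definitions since all three conditions are inequalities that add. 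Putting these pieces together gives the claim.
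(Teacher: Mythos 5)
Your proof is correct and follows essentially the same route as the paper: both reduce submodularity to the per-keyword inequality $\sqrt{a+\delta}-\sqrt{a}\geq\sqrt{b+\delta}-\sqrt{b}$ for $0\le a\le b$, $\delta\ge 0$ (concavity of $\sqrt{\cdot}$ applied to the modular inner sums), and then sum over keywords. The only cosmetic difference is that the paper sums only over the keywords $W_e$ of the new element $e$ (the others contribute zero marginal gain), while you sum over all of $\cW$ and phrase the argument as the standard concave-of-modular composition; this changes nothing substantive.
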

\begin{proof}
	The not-negativity and monotonicity of $f$ are trivial.
	For two sets $A \subset B$ and $e \in \ground \setminus B$ we show that
	\[f(\{e\}  \cup A  ) - f(A) \geq f(\{e\}  \cup B) - f(B).\]
	To prove the above inequality, assume $W_e = \{ w_{e,1}, \cdots, w_{e, \ell_e}\}$  is the set of keywords of $e$.
	For a keyword $w_{e,i}$ define $a_{w_{e,i}} = \sum_{u \in A} \text{score}(w_{e,i},u)$
	and $b_{w_{e,i}} = \sum_{u \in B} \text{score}(w_{e,i},u)$.
	It is obvious that $a_{w_{e,i}} \leq b_{w_{e,i}}$.
	It is straightforward to show that
	\[\textstyle \sqrt{a_{w_{e,i}} + \text{score}(w_{e,i},e)}  - \sqrt{a_{w_{e,i}}  } \geq \sqrt{b_{w_{e,i}} + \text{score}(w_{e,i},u)} - \sqrt{b_{w_{e,i}} }. \]
	If sum over all keywords in $W_e$ then the submodularity of $f$ is proven.
\end{proof}

\section{More Experimental Results} \label{more_graphs}

In this section, we will present a few more graphs that we didn't have space for in the main paper.

\subsection{Single-Source Experiments}

\begin{figure*}[tb!] 
	\centering     
\subfloat[]{\includegraphics[height=50mm]{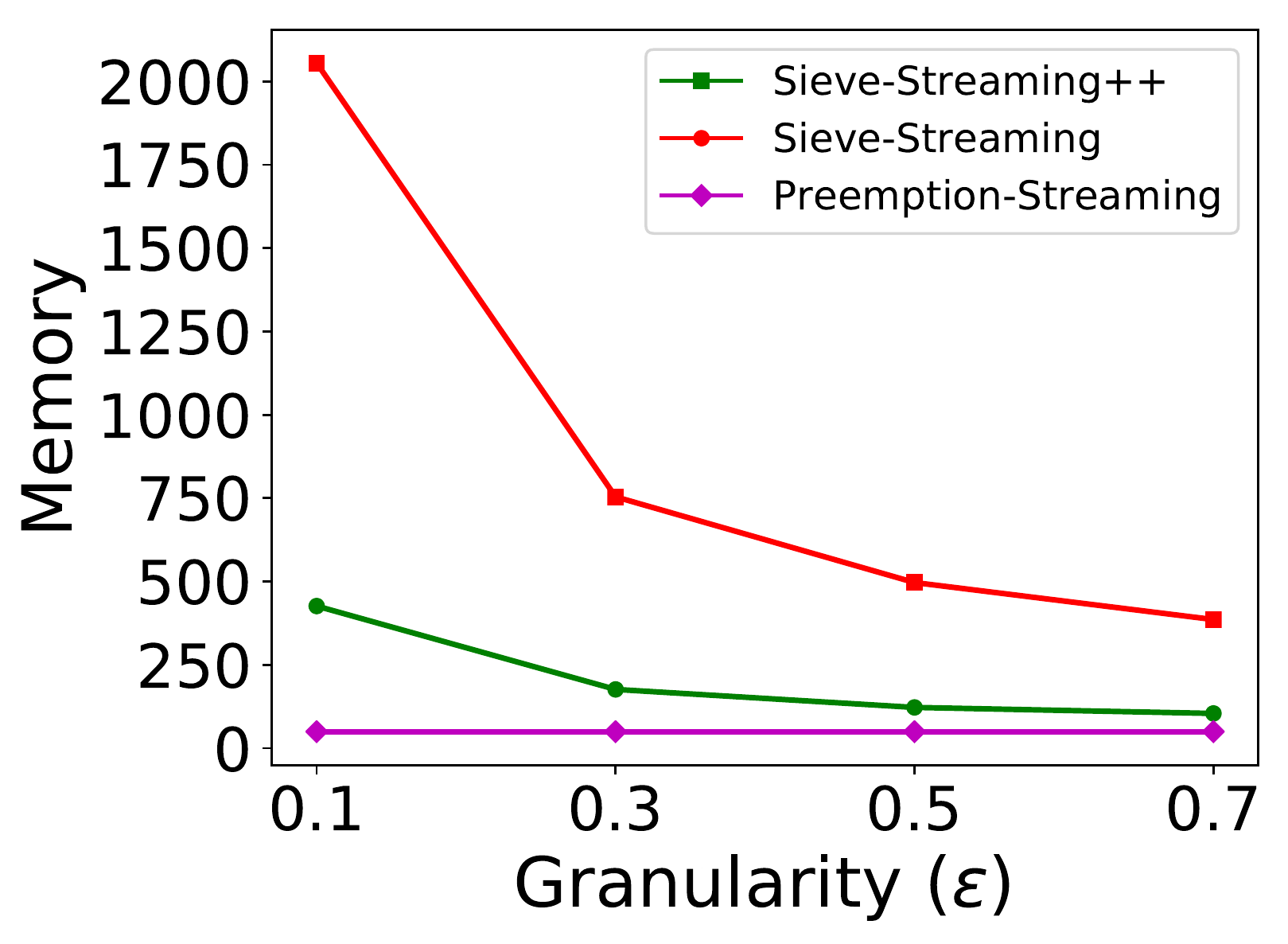}}
\qquad
\subfloat[]{\includegraphics[height=50mm]{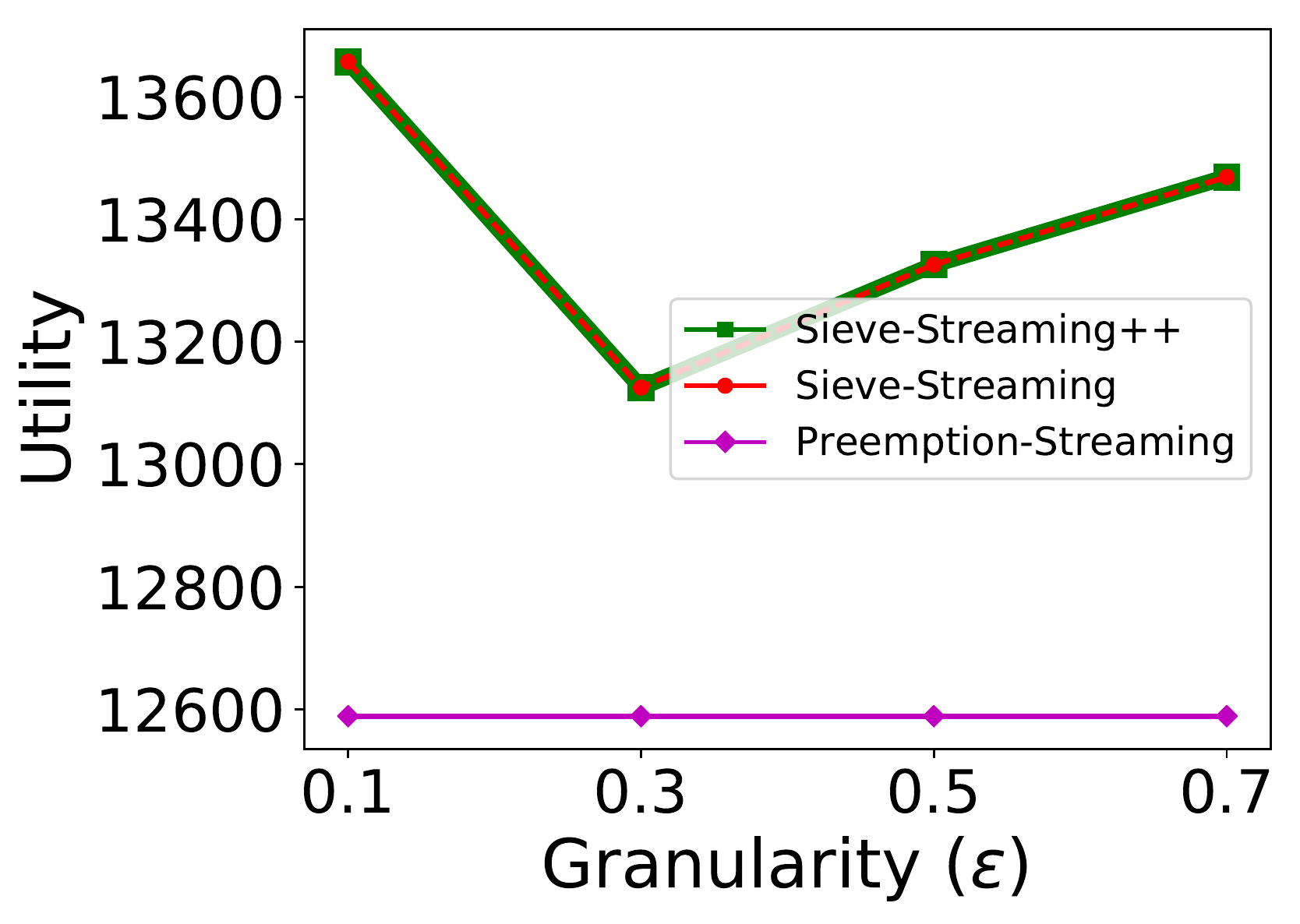}}\\
\subfloat[]{\includegraphics[height=50mm]{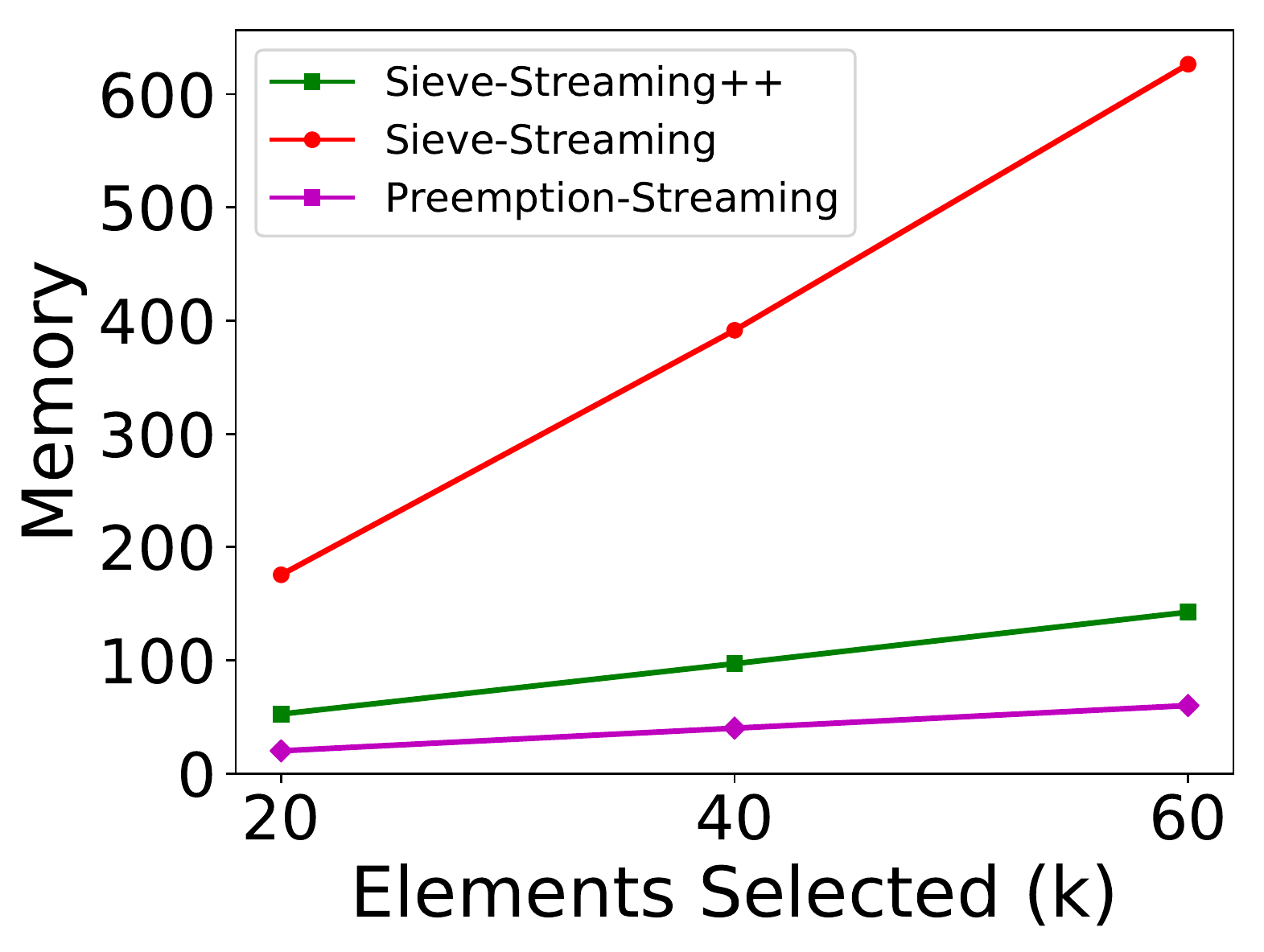}}
\qquad
\subfloat[]{\includegraphics[height=50mm]{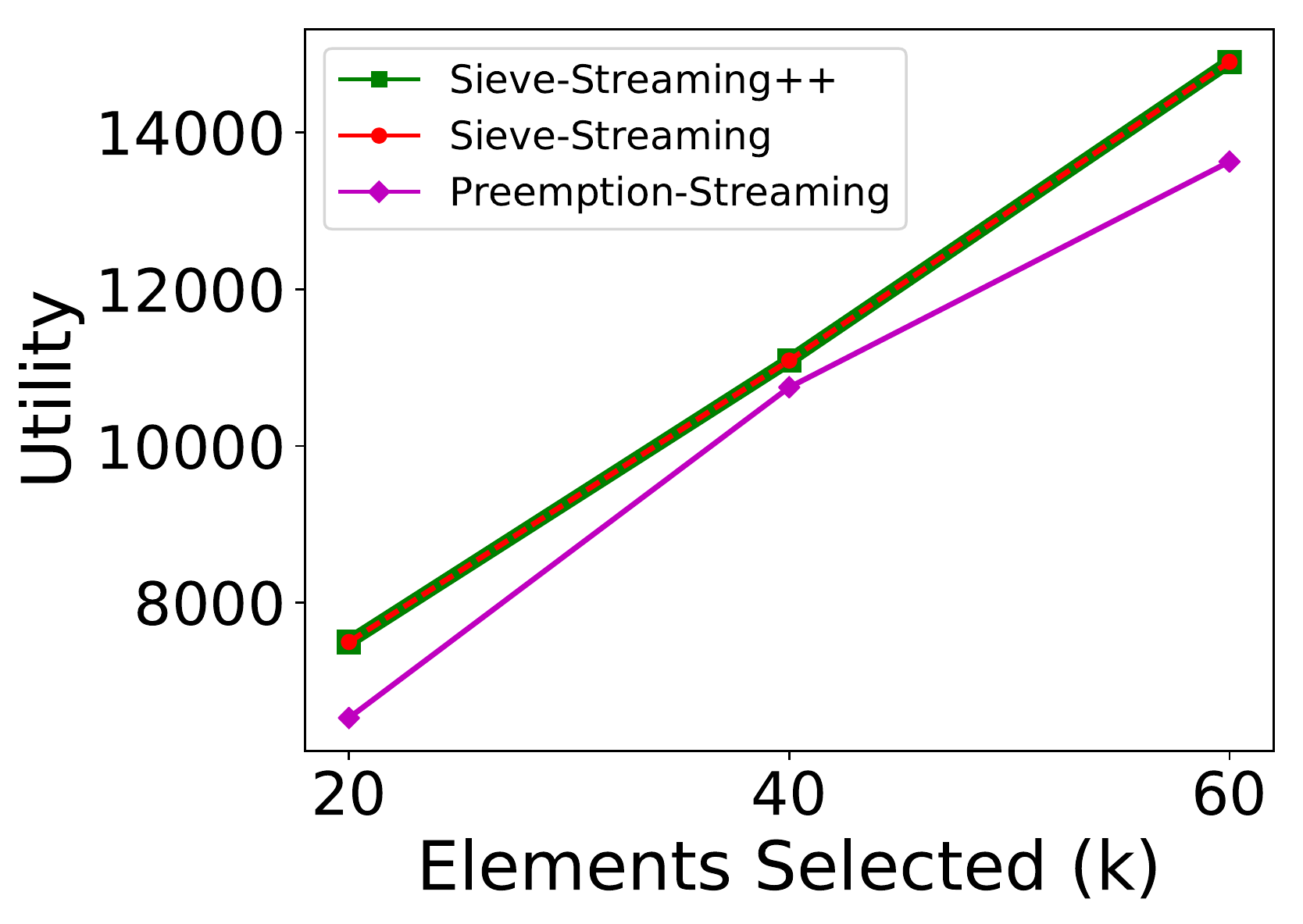}} 
\caption{Single-source streaming results on the Twitter summarizaton task. In (a) and (b), $\epsilon = 0.5$. In (c) and (d), $k = 50$.}
\label{appendixGraphs1}
\end{figure*}

Here we present the set of graphs that we displayed in Figure \ref{sieveGraphs}, except here they are run on the Twitter dataset instead. For the most part, they are showing the same trends we saw before.  \AlgSieveStreamingPlus has the exact same utility as  \AlgSieveStreaming, which is better than \AlgStreamingPreemption. We also see the memory requirement of \AlgSieveStreamingPlus is much lower than that of \AlgSieveStreaming, as we had hoped. 

The only real difference is in the shape of the utility curve as $\epsilon$ varies. In Figure \ref{sieveGraphs}, the utility was decreasing as $\epsilon$ increased, which is not necessarily the case here. However, this is relatively standard because changing $\epsilon$ completely changes the set of thresholds kept by \AlgSieveStreamingPlus, so although it usually helps the utility, it is not necessarily guaranteed to do so.

Also, note that we only went up to $k = 60$ in this experiment because \AlgStreamingPreemption was prohibitively slow for larger $k$.

\subsection{Multi-Source Experiments}

\begin{figure*}[htb!] 
	\centering     
\subfloat[]{\includegraphics[height=47mm]{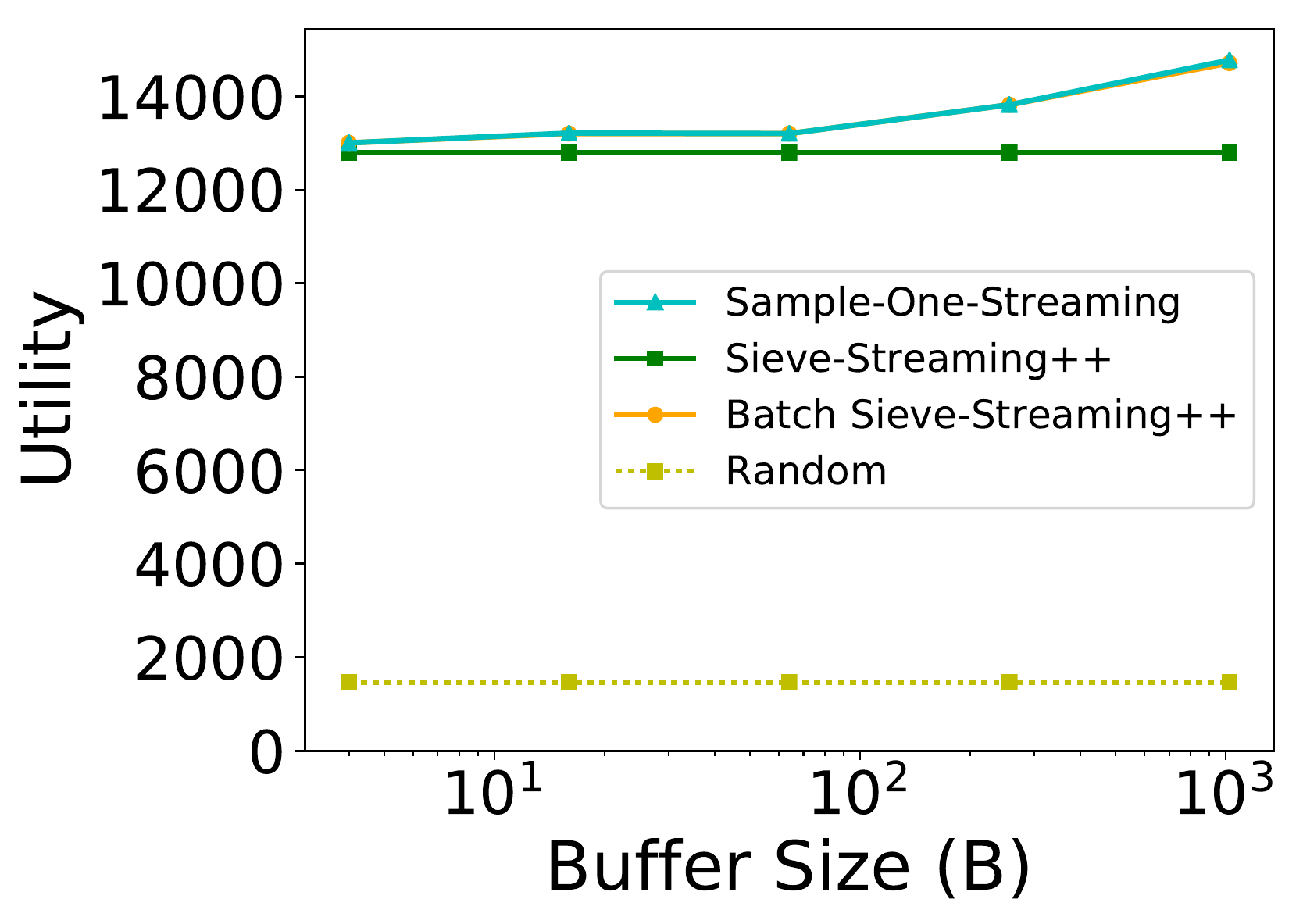}\label{twitterUtil2}} 
\qquad
\subfloat[]{\includegraphics[height=47mm]{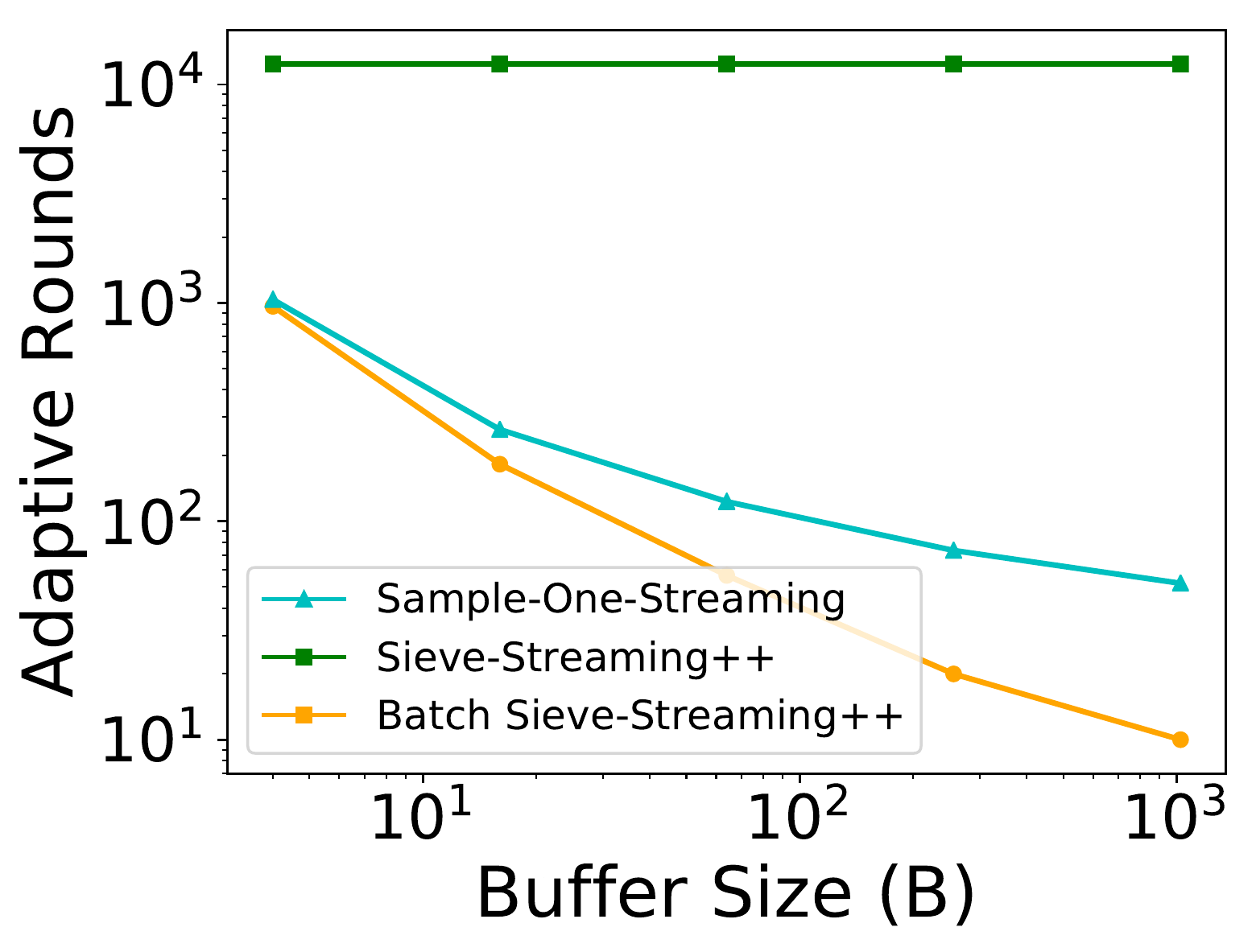}\label{twitterAdapt2}}\\
\subfloat[]{\includegraphics[height=47mm]{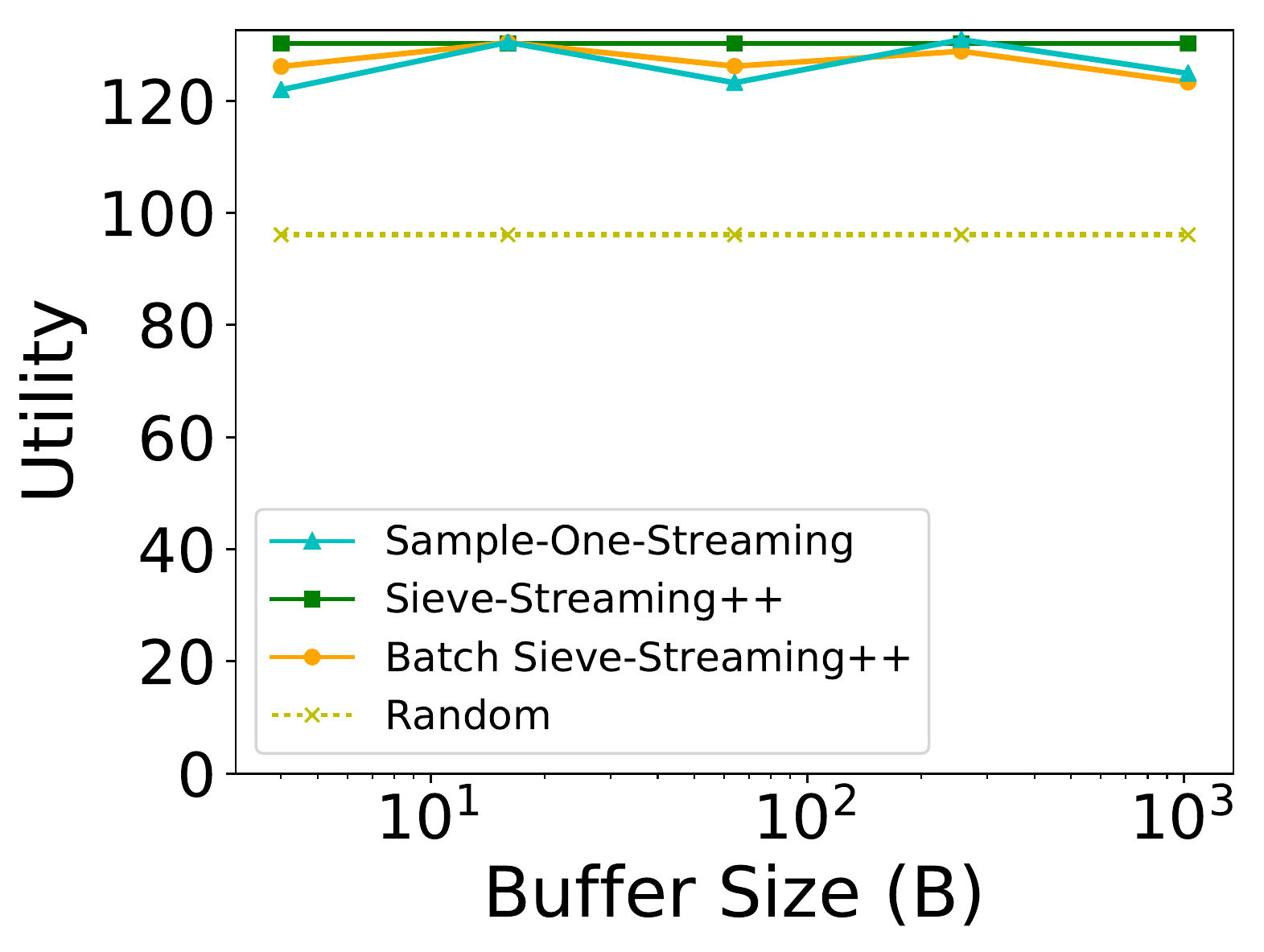}\label{startYou}}
\qquad
\subfloat[]{\includegraphics[height=47mm]{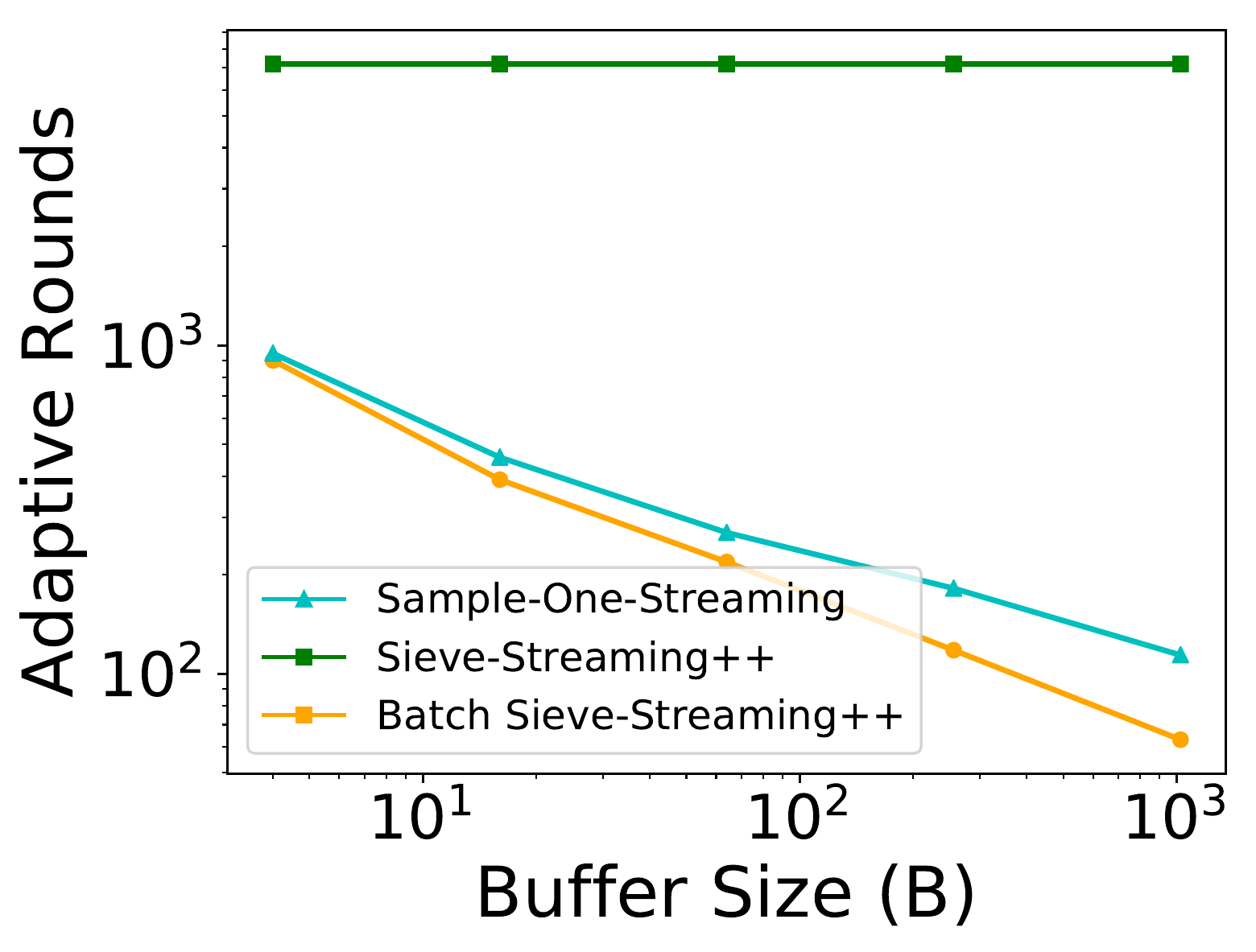}}
\\
\subfloat[]{\includegraphics[height=47mm]{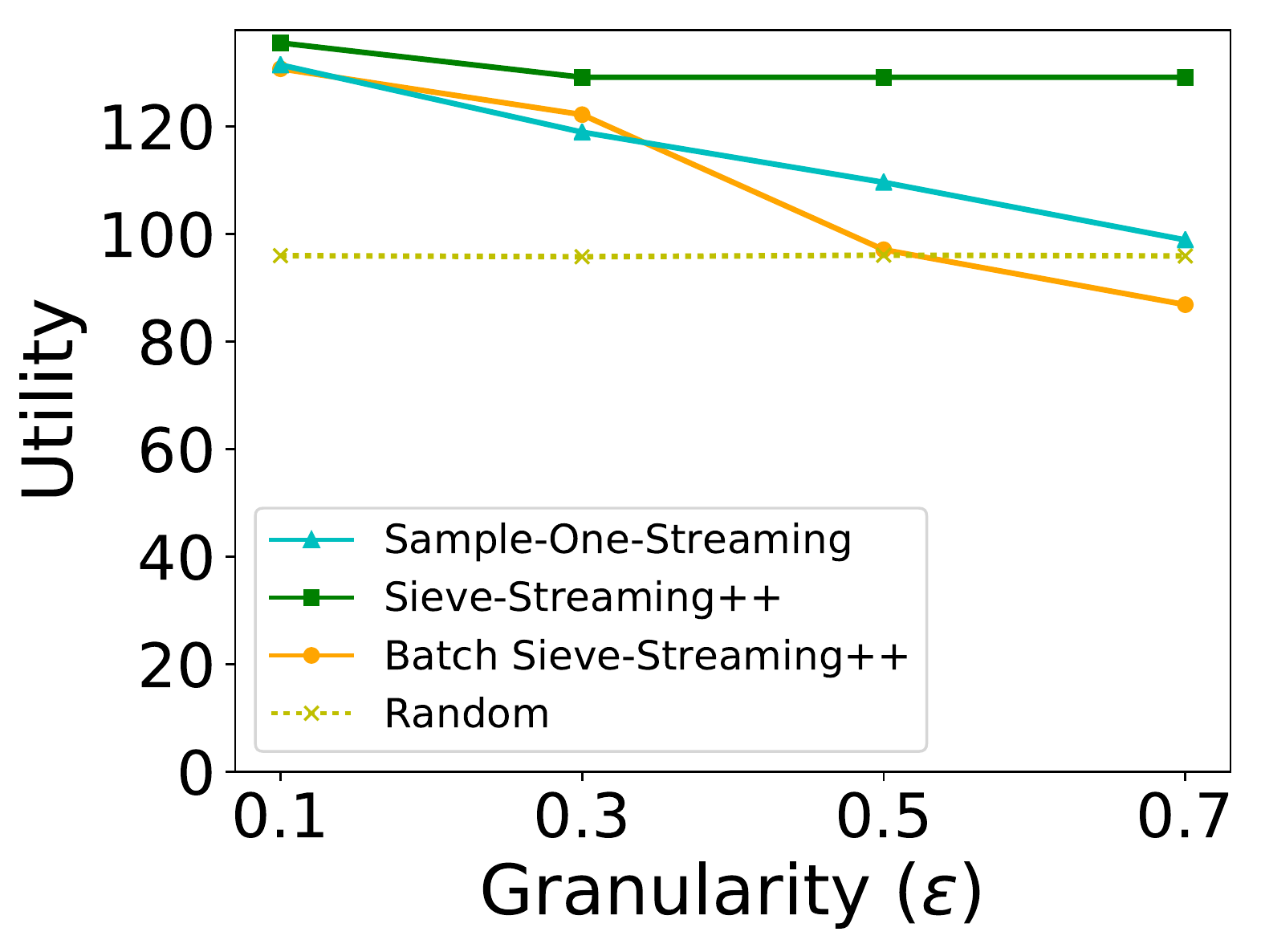}}
\qquad
\subfloat[]{\includegraphics[height=47mm]{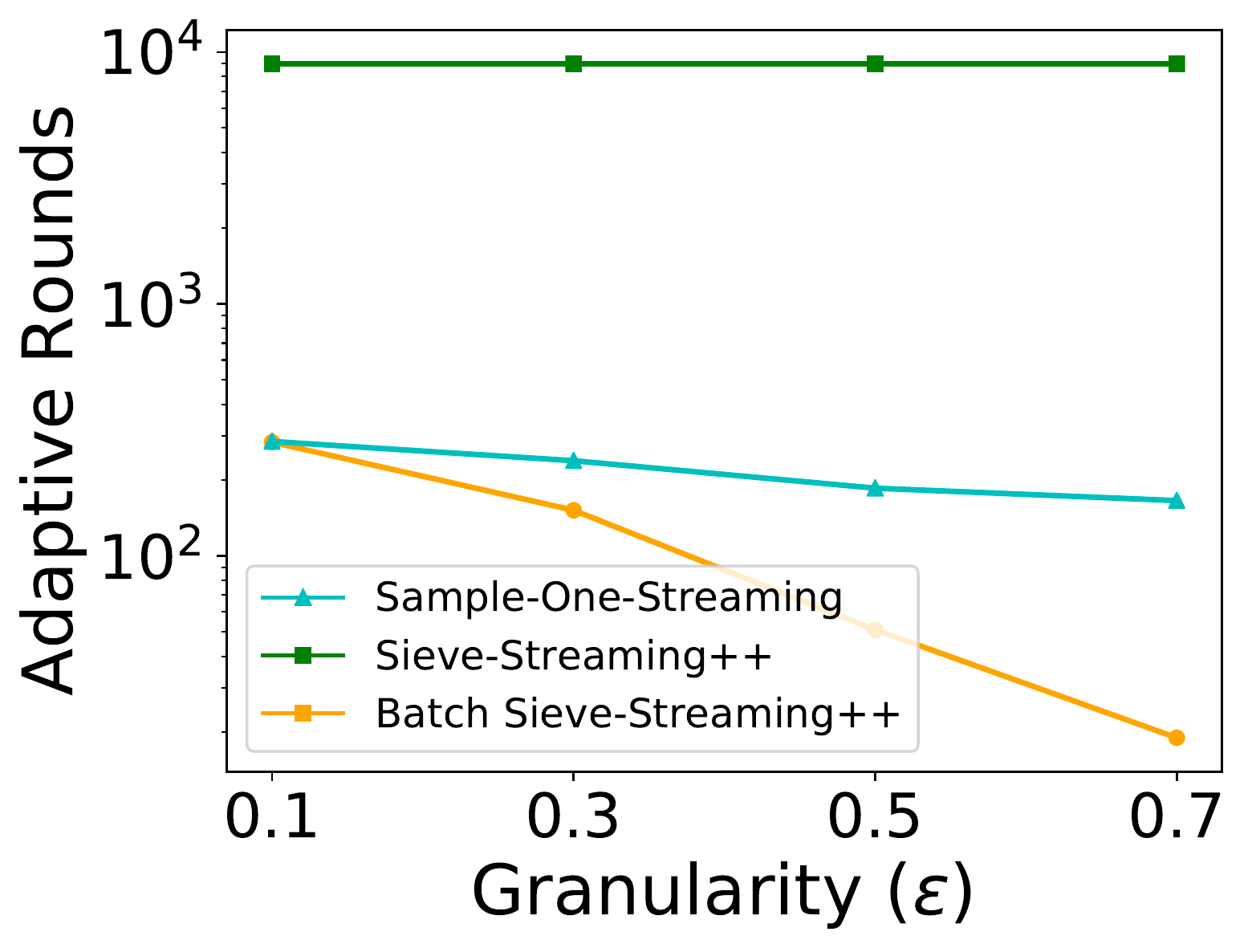}\label{endYou}} 
\caption{Additional multi-source graphs. (a) and (b) are additional graphs for the Twitter dataset, this time with $\epsilon = 0.6$ and $k = 50$. (c) through (f) are the equivalent of Figure \ref{hybridGraphs}, but for the YouTube dataset. Unless they are being varied on the x-axis, we set $\epsilon = 0.25$, $\memory = 100$, and $k = 100$.}
\label{appendixGraphs2}
\end{figure*}

In Figure \ref{twitterUtil}, \AlgSieveStreamingPlus had the best utility performance. In Figure \ref{twitterUtil2}, we set $k = 50$ and $\epsilon = 0.6$ and now we see that \AlgHybrid and \AlgOne have higher utility, and that this utility increases as the buffer size increases. However, in this case too, the main message is that the utilities of the three algorithms are comparable, but \AlgHybrid uses the fewest adaptive rounds (Figure \ref{twitterAdapt}).

In Figures \ref{startYou} through \ref{endYou}, we display the same set of graphs as Figure \ref{hybridGraphs}, but for the YouTube experiment. In the YouTube experiment, it is more difficult to select a set of items that is significantly better than random, so we need to use a smaller value of $\epsilon$. We see that for smaller $\epsilon$, the difference in adaptive rounds between \AlgHybrid and \AlgOne is smaller. This is consistent with our results because the number of adaptive rounds required by \AlgOne does not change much with $\epsilon$, while the number of adaptive rounds of \AlgHybrid increases as $\epsilon$ gets smaller.

\end{document}